\def\eqref#1{equation~\ref{#1}}
\def\1{\bm{1}}
\newcommand{\Argmax}[1]{\underset{#1}{\operatorname{argmax}}}
\newcommand{\Argmin}[1]{\underset{#1}{\operatorname{argmin}}}
\renewcommand{\>}{\rightarrow}
\renewcommand{\P}{\mathbb{P}}
\newcommand{\cO}{\mathcal{O}}
\newcommand{\cD}{\mathcal{D}}
\DeclareMathAlphabet{\mathsfit}{\encodingdefault}{\sfdefault}{m}{sl}
\SetMathAlphabet{\mathsfit}{bold}{\encodingdefault}{\sfdefault}{bx}{n}
\newcommand{\E}{\mathbb{E}}
\newcommand{\R}{\mathbb{R}}
\DeclareMathOperator*{\argmax}{arg\,max}
\DeclareMathOperator*{\argmin}{arg\,min}
\newcommand{\g}{g}
\newcommand{\uu}{r}
\newcommand{\DRE}{\textrm{\textup{DRL}}}
\newcommand{\G}{\mathbb{G}}
\newcommand{\cF}{\mathcal{F}}
\newcommand{\clip}{\textrm{\textup{clip}}}
\newcommand{\cL}{\mathcal{L}}
\newtcolorbox{mybox1}{colback=gray!10!white,colframe=gray!10!white,left=1.5mm,top=-0.5mm,right=1.5mm,boxsep=0mm,width=8.6cm,before=\par\smallskip\centering,after=\par,
height=1.5cm}
\newtcolorbox{mybox2}{colback=gray!10!white,colframe=gray!10!white,left=1mm,top=0.5mm,right=1mm,boxsep=0mm,width=8.6cm,before=\par\smallskip\centering,after=\par,
height=1cm}
\definecolor{best}{HTML}{FFC8BA}
\definecolor{bad}{HTML}{FFC87C}
\newcommand{\good}[1]{\cellcolor{blue!10}} 
\newcommand{\cmark}{\ding{51}}%
\newcommand{\xmark}{\ding{55}}%
\definecolor{mydarkblue}{rgb}{0,0.08,0.45}
\newcommand{\punt}[1]{}
\newtheorem{theorem}{Theorem}
\newtheorem{prop}{Proposition}
\newtheorem{lemma}[theorem]{Lemma}
\def\argmax{\mathop{\rm arg\,max}}
\def\argmin{\mathop{\rm arg\,min}}
\newcommand{\reals}{\mathbb{R}}
\def\argmax{\mathop{\rm arg\,max}}
\def\argmin{\mathop{\rm arg\,min}}
\newcommand{\bq}{\begin{equation}}
\newcommand{\eq}{\end{equation}}
\newcommand{\ba}{\begin{eqnarray}}
\newcommand{\ea}{\end{eqnarray}}
\def\R{{\reals}}
\newcommand{\remove}[1]{}
\newcommand{\X}{\mathcal{X}}
\newcommand{\ie}{\textit{i}.\textit{e}.}
\newcommand{\eg}{\textit{e}.\textit{g}.}
\title{Distributionally Robust Post-hoc Classifiers \\under Prior Shifts}
\author{Jiaheng Wei\thanks{Work done during an internship at Google Research, Brain Team.}$~~^{\dagger}$\\
UC Santa Cruz \And  Harikrishna Narasimhan\\
Google Research \And Ehsan Amid \\
Google Research
\AND 
Wen-Sheng Chu\\
Google Research\\
\And
 Yang Liu \\
UC Santa Cruz
 \And Abhishek Kumar \thanks{Correspondence to: Jiaheng Wei $<$jiahengwei@ucsc.edu$>$,  Abhishek Kumar ~$<$abhishk@google.com$>$.}\\
 Google Research
}
\begin{document}

\maketitle

\begin{abstract}
The generalization ability of machine learning models degrades significantly when the test distribution shifts away from the training distribution. We investigate the problem of training models that are robust to shifts caused by changes in the distribution of class-priors or group-priors. The presence of skewed training priors can often lead to the models overfitting to spurious features. Unlike existing methods, which optimize for either the worst or the average performance over classes or groups, our work is motivated by the need for finer control over the robustness properties of the model. We present an extremely lightweight post-hoc approach that performs scaling adjustments to predictions from a pre-trained model, with the goal of minimizing a distributionally robust loss around a chosen target distribution. These adjustments are computed by solving a constrained optimization problem on a validation set and applied to the model during test time. Our constrained optimization objective is inspired from a natural notion of robustness to controlled distribution shifts. Our method comes with provable guarantees and empirically makes a strong case for distributional robust post-hoc classifiers. An empirical implementation is available at \url{https://github.com/weijiaheng/Drops}. 
\end{abstract}

\section{Introduction}
Distribution shift, a problem characterized by the shift of test distribution away from the training distribution, deteriorates the generalizability of machine learning models and is a major challenge for successfully deploying these models in the wild. We are specifically interested in distribution shifts resulting from changes in marginal class priors or group priors from training to test. This is often caused by a skewed distribution of classes or groups in the training data, and vanilla empirical risk minimization (ERM) can lead to models overfitting to spurious features. These spurious features seem to be predictive on the training data but do not generalize well to the test set.
For example, the background can act as a spurious feature for predicting the object of interest in images, \eg, camels in a desert background, water-birds in water background \citep{sagawa2019distributionally}.

Distributionally robust optimization (DRO) \citep{ben2013robust,duchi2016statistics,duchi2018learning,sagawa2019distributionally} is a popular framework to address this problem which formulates a robust optimization problem over class- or group-specific losses. The common metrics of interest in the DRO methods are either the average accuracy or the worst accuracy over classes/groups \citep{menon2021longtail,jitkrittum2022elm,rosenfeld2022domain,piratla2022focus,sagawa2019distributionally,zhai2021doro,xu2020class,kirichenko2022last}.
However, these metrics only cover the two ends of the full spectrum of distribution shifts in the priors. We are instead motivated by the need to measure the robustness of the model at various points on the spectrum of distribution shifts. 

To this end, we consider applications where we are provided a target prior distribution (that could either come from a practitioner or default to the uniform distribution), and would like to train a model that is robust to varying distribution shifts around this prior. Instead of taking the conventional approach of optimizing for either  the average accuracy or the worst-case accuracy, we seek to maximize the minimum accuracy within a $\delta$-radius ball around the specified target  distribution. This strategy allows us to encourage generalization on a spectrum of controlled distribution shifts governed by the parameter $\delta$. When $\delta = 0$, our objective is simply the average accuracy for the specified target priors, and when $\delta \rightarrow \infty$, it reduces to the model's worst-case accuracy, thus providing a natural way to interpolate between the two extreme goals of average and worst-case optimization.

To train a classifier that performs well on the prescribed distributionally robust objective, we  propose a fast and extremely lightweight post-hoc method that learns scaling adjustments to predictions from a pre-trained model. These adjustments are computed by solving a constrained optimization problem on a validation set, and then applied to the model during evaluation time.  A key advantage of our method is that it is able to reuse the same pretrained model for different robustness requirements by simply scaling the model predictions. This is contrast to several existing DRO methods that train all model parameters using the robust optimization loss \citep{sagawa2019distributionally,piratla2022focus}, which requires group annotations for the training data and requires careful regularization to make it work with overparameterized models \citep{sagawa2019distributionally}. On the other hand, our approach  only needs group annotations for a smaller held-out set and works by only scaling the model predictions of a pre-trained model at test time. Our method also comes with provable convergence guarantees.  
We apply our method on standard benchmarks for class imbalance and group DRO, and show that it
compares favorably to the existing methods when evaluated on a range of  distribution shifts away from the target prior distribution. 

\remove{
Our contributions can be summarized as:
\begin{itemize}[noitemsep,topsep=0pt,leftmargin=4mm]
\vspace{-2mm}
\item We introduce a generic and robustness optimization task, and would like to train a model that is robust to varying distribution shifts around a target prior distribution.  It covers the a broad family of optimization tasks including the class-imbalanced learning, group distribution robustness, etc.
\item We propose DROPS, a lightweight and fast method which scales the prediction of a pre-trained model in a post-hoc manner, to achieve the distributional robustness under prior shifts.
\item Theoretically, we show that DROPS yields the Bayes-optimal solution to the robust objective optimization task. We also provide the convergence behavior of its.
\end{itemize}
}
    
\section{Background}
We are primarily interested in two specific prior shifts for distributional robustness of classifiers. In this section, we  briefly introduce the problem setting of the two prior shifts and set the notation. 

{\bf Class-Level Prior Shifts.~} 
We are interested in a multi-class classification problem with instance space $\X$
and output space $[m] = \{1, \ldots, m\}$. Let $\cD$ denote the underlying data distribution over $\X \times [m]$, the random variables of instance $X$ and label $Y$ satisfy that $(X, Y)\sim \cD$ . We define the
conditional-class probability as $\eta_y(x) = \P(Y=y|X=x)$ and the class priors $\pi_y = \P(Y=y)$, note that $\pi_y = \E\left[ \eta_y(x) \right]$. We use $u = [\frac{1}{m}, \ldots, \frac{1}{m}]$ to denote the uniform prior over $m$ classes. 

Our goal is then to learn a multi-class classifier $h: \X \> [m]$ that maps an instance $x \in \X$ to one of $m$ classes.
We will do so by first learning a scoring function $f: \X \> 
\Delta_m$ that estimates the conditional-class probability for a given instance,
and construct the classifier by predicting the class with the highest score: $h(x) = \argmax_{j \in [m]}\,f_j(x)$. We measure the performance of a scoring function using a loss function $\ell: [m] \times \Delta_m \rightarrow \R_+$ and measure the per-class loss using $\ell_i(f) := \E\left[ \ell(y, f(x)) \,|\, y = i\right]$. 

Let $\{(x_i, y_i)\}_{i=1}^n$ be a set of training data samples. The empirical estimate of training set prior is $\hat{\pi}_i:=\frac{1}{n}\sum_{j\in[n]}\mathbf{1}(y_j=i)$ where $\mathbf{1}(\cdot)$ is the indicator function. In class prior shift, the class prior probabilities at test time shift away from $\hat{\pi}$. A special case of such class-level prior shifts includes class-imbalanced learning \citep{lin2017focal,cui2019class,cao2019learning,ren2020balanced,menon2021longtail} where $\hat{\pi}$ is a long-tailed distribution while the class prior at test time is usually chosen to be the uniform distribution. Regular ERM tends to focus more on the majority classes at the expense of ignoring the loss of the minority classes. Recent work \citep{menon2021longtail} uses temperature-scaled logit adjustment with training class priors to adapt the model for average class accuracy. Our method also applies post-hoc adjustments to model probabilities, but our goal differs from \citep{menon2021longtail} as we care for varying distribution shifts around the uniform prior and the scaling adjustments are learned using a held-out set to optimize for a constrained robust loss. 

{\bf Group-Level Prior Shifts.~ } 
The notion of \emph{groups} arises when each data point $(x,y)$ is associated with some attribute $a\in A$ that is spuriously correlated with the label. This is used to form $m=|A|\times |Y|$ groups as the cross-product of $|A|$ attributes and $|Y|$ classes. The data distribution $\cD$ is taken to be a mixture of $m$ groups with mixture prior probabilities $\pi$, and each group-conditional distribution given by $\cD_j,\,j\in[m]$. In this scenario, we have $n$ training samples $\{(x_i,y_i)\}_{i=1}^n$ drawn i.i.d. from $\cD$, with empirical group prior probabilities $\hat{\pi}$. For skewed group-prior probabilities $\hat{\pi}$, regular ERM is vulnerable to spurious correlations between the attributes and labels, and the accuracy degrades when the test data comes from a shifted group prior (\eg, balanced groups). Domain-aware methods typically assume that the attributes are available for the training examples and optimize for the worst or average group loss \citep{sagawa2019distributionally,piratla2022focus}. However, recent work \citep{rosenfeld2022domain,kirichenko2022last} has observed that ERM on skewed group priors is able to learn core features (in addition to spurious features), and training a linear classifier on top of ERM learned features with a small balanced held-out set works quite well. Our proposed method is aligned with this recent line of work in assuming access to only a small held-out set with group annotations. However, we differ in two aspects: {\bf (i)} Our method is more lightweight and works by only scaling the model predictions post-hoc during test time, {\bf (ii)} The scaling adjustments are learned to allow more control over the desired robustness properties than implicitly targeting the worst or average accuracies as in \citep{kirichenko2022last}.

{\bf Evaluation Metrics under Prior Shifts.~ }
Typical evaluation metrics used under prior shifts are:
\begin{itemize}[noitemsep,topsep=0pt,leftmargin=4mm]
\vspace{-2mm}
    \item {\small \textsf{Mean}}: This evaluation metric assigns uniform weights for each class or group, measuring the average class- or group-level test accuracy. 
    \item {\small \textsf{Worst}}: This evaluation metric measures the performance on the worst class or group, and fails to examine the effectiveness of proposed methods on the remaining classes. 
    \item {\small \textsf{Stratified:}} To overcome the issue of worst-case metric, stratified metrics are sometimes used that divide the classes into three strata and report average accuracy in each stratum, specifically (i) {\small \textsf{Head}}:  average accuracy on subsets of classes that contain more than a specified number (\eg, 100) of training samples; (ii) {\small \textsf{Torso}}:  average accuracy on classes that contain, \eg, 20 to 100 samples; (iii) {\small \textsf{Tail}}: average accuracy on tail classes. 
\end{itemize}
The \emph{mean} and \emph{worst} metrics are limited in that they probe the model robustness only on the two ends of the full spectrum. The \emph{stratified} metric looks into three strata of classes, but the design of strata is more heuristic and doesn't allow for a principled interpolation between \emph{mean} and \emph{worst} metrics.

\section{Distributional Robustness Objective under Prior Shifts}
We assume that we are given a target prior distribution (that could either come from a practitioner or default to the uniform distribution), and our goal is to train a model that is robust to varying distribution shifts around this prior. To this end, we consider an objective that naturally captures this goal and  is based on the weighted sum of class-level (or group-level) performances on the test data, \ie, $\sum_{i\in[m]} \g_i \text{Acc}_i$, where $\text{Acc}_i$ indicates the  accuracy of the class/group $i$: 
\begin{align}
   \delta\textbf{-worst  accuracy:} \qquad \min_{\g\in\Delta_m}~~ \sum_{i\in [m]}\g_i
   \text{Acc}_i,
   ~~~\text{s.t. }  
   D(\g, \uu)\leq \delta\,.\label{eq:class_dro}
\end{align}
Here $\Delta_m$ denotes the $(m-1)$-dimensional probability simplex and $r\in\Delta_m$ is a specified reference (or target) distribution. 
The $\delta$-worst  accuracy seeks the worst-case $\g$-weighted performance with the weights constrained to lie within the $\delta$-radius ball (defined by the divergence $D:\Delta_m\times\Delta_m\to\reals$) around the target distribution $\uu$. For uniform distribution $\uu=u$ and any choice of divergence $D$, it reduces to the mean accuracy for $\delta=0$ and the worst accuracy for $\delta\to\infty$. The objective interpolates between these two extremes for other values of $\delta$ and captures our goal of optimizing for variations around target priors instead of more conventional objectives of optimizing for either the average accuracy at the target prior or the worst-case accuracy. 
The divergence constraint in the $\delta$-worst objective is convex in $g$ for several divergence functions (\eg, $f$-divergence, Bregman divergence), and allows for efficient measurement as well (if the per-class accuracies are known) using standard packages such as CVXPY \citep{diamond2016cvxpy}.

\section{Robust Post-hoc Classifiers under Class Prior Shifts}

In this section, we propose a \textbf{D}istributionally \textbf{RO}bust \textbf{P}o\textbf{S}t-hoc method, \textbf{DROPS}, which enables the reuse of a pre-trained model for different robustness requirements by simply scaling the model predictions. 
In alignment with the earlier $\delta$-worst objective, we aim to learn a classifier $h$ with scoring function $f$ as follows:
\begin{align}
   \textbf{Goal:}\quad &\min_{f: \X \rightarrow \Delta_m} \max_{\g\in\Delta_m} \sum_{i\in [m]} \g_i ~\mathbb{P}_{X|Y=i}(f(X)\neq Y), ~~\text{s.t. }
   D(\g, \uu)\leq \delta\,.\label{eq:goal}
\end{align}

\subsection{Bayes-optimal Scorer}
Recall that we measure the performance of a scoring function using a loss function $\ell: [m] \times \Delta_m \rightarrow \R_+$, and the per-class loss using $\ell_i(f) := \E\left[ \ell(y, f(x)) \,|\, y = i\right]$. To facilitate theoretical analysis, 
we consider the scenario where the target distribution $\uu$ is the uniform prior $u = [\frac{1}{m}, \ldots, \frac{1}{m}]$. We are then interested in minimizing the following robust objective optimization:
\begin{mybox1}
\begin{align}
    \min_{f: \X \rightarrow \Delta_m} \DRE(f; \delta) = \min_{f: \X \rightarrow \Delta_m}\, \max_{\g \in \G(\delta)}\,\sum_{i=1}^m \g_i\, \ell_i(f)\, ,
    \label{eq:dre}
\end{align}
\end{mybox1}
where $\G(\delta) = \{\g \in \Delta_m \,|\, D(\g, u) \leq \delta\}$ for some $\delta > 0$ and divergence function $D: \Delta_m \times \Delta_m \rightarrow \R_+$. We first derive the Bayes-optimal solution to the learning problem in \eqref{eq:dre}.
\begin{theorem}[Bayes-Optimal scorer]
Suppose $\ell(y, z)$ is a proper loss that is convex in its second argument and $D(\g, \cdot)$ is convex in $\g$. Let $\delta > 0$ be such that $\G(\delta)$ is non-empty. The optimal solution to \eqref{eq:dre}  takes the following form for some $\g^* \in \G(\delta)$:
\[
f^*_y(x) \propto \frac{\g^*_y}{\pi_y}\cdot \eta_y(x)\,.
\]
\label{thm:dre-bayes}
\end{theorem}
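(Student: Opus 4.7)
My plan is to use a minimax swap, followed by a pointwise Bayes-optimality argument on the inner problem that exploits the properness of $\ell$. The statement is essentially the observation that a $\g$-weighted proper loss, rewritten in terms of the marginal $\P(X)$ rather than the class-conditionals $\P(X\mid Y=i)$, becomes an ordinary proper-loss problem with a reweighted conditional class probability proportional to $(\g_i/\pi_i)\eta_i(x)$; properness then identifies the minimizer.

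\textbf{Step 1: minimax swap.} The objective $F(f,\g):=\sum_i \g_i\,\ell_i(f)$ is linear (hence concave and upper semi-continuous) in $\g$, and since $\ell(y,\cdot)$ is convex in its second argument, $\ell_i(f)=\E[\ell(y,f(x))\mid y=i]$ is convex in $f$, so $F$ is convex in $f$ over the convex set of scoring functions $f:\X\to\Delta_m$. The feasible set $\G(\delta)=\{\g\in\Delta_m:D(\g,u)\leq\delta\}$ is convex by assumption on $D(\cdot,u)$, and is compact as a closed subset of $\Delta_m$. Sion's minimax theorem therefore applies and
\[
\min_{f}\max_{\g\in\G(\delta)} F(f,\g)=\max_{\g\in\G(\delta)}\min_{f} F(f,\g).
\]
Let $\g^*$ be a maximizer of the outer problem on the right; it suffices to show that the inner minimizer for $\g=\g^*$ takes the advertised form, and then verify that this $f^*$ also solves the original min-max.

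\textbf{Step 2: reduce the inner problem via Bayes' rule.} For fixed $\g\in\G(\delta)$, rewrite the class-conditional expectation using $\P(X=x\mid Y=i)=\eta_i(x)\,p(x)/\pi_i$:
\[
\sum_{i=1}^m \g_i\,\ell_i(f)=\int p(x)\sum_{i=1}^m\frac{\g_i\,\eta_i(x)}{\pi_i}\,\ell(i,f(x))\,dx.
\]
The integrand depends on $f$ only through $f(x)$, so the minimization reduces to a pointwise problem: for each $x$, choose $z\in\Delta_m$ minimizing $\sum_i (\g_i\eta_i(x)/\pi_i)\,\ell(i,z)$. Normalizing by $S_\g(x):=\sum_j \g_j\eta_j(x)/\pi_j$, the pointwise problem is equivalent to minimizing $\sum_i \tilde\eta_i(x)\,\ell(i,z)$ with
\[
\tilde\eta_i(x):=\frac{\g_i\,\eta_i(x)/\pi_i}{S_\g(x)}\in\Delta_m.
\]

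\textbf{Step 3: apply properness.} By the definition of a proper loss, the Bayes-optimal minimizer of $\sum_i q_i\,\ell(i,z)$ over $z\in\Delta_m$ for any probability vector $q\in\Delta_m$ is $z=q$. Applied at $q=\tilde\eta(x)$, this yields the pointwise minimizer $f^*(x)=\tilde\eta(x)$, i.e.
\[
f^*_y(x)=\frac{(\g^*_y/\pi_y)\,\eta_y(x)}{\sum_j (\g^*_j/\pi_j)\,\eta_j(x)}\;\propto\;\frac{\g^*_y}{\pi_y}\,\eta_y(x),
\]
which is the claimed form. Finally, from $F(f^*,\g^*)=\min_f\max_\g F$ by the minimax equality, $f^*$ is indeed optimal in the original problem.

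\textbf{Main obstacle.} The only non-routine point is Step 1: verifying that the hypotheses of Sion's theorem (or an equivalent minimax result) are cleanly satisfied over the unrestricted function class $\{f:\X\to\Delta_m\}$, which is not compact. A clean way around this is to argue at the level of the saddle value rather than existence of a minimizer over $f$: fix the maximizer $\g^*$ (which exists by compactness of $\G(\delta)$ and concavity in $\g$), solve the inner minimization pointwise as in Steps 2--3 (where no compactness in $f$ is needed since the pointwise minimizer over $\Delta_m$ exists by continuity/convexity of $\ell(y,\cdot)$ on the simplex), and then verify directly that the resulting $f^*$ together with $\g^*$ forms a saddle point, which implies optimality in the original min-max. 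Properness also has to be used in the slightly generalized form where the conditional distribution is $\tilde\eta$ rather than $\eta$; this follows immediately from the definition since the pointwise problem only sees a convex combination of $\ell(i,\cdot)$'s with nonnegative weights summing to one.
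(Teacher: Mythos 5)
Your overall route---Sion's minimax swap followed by the pointwise properness argument applied to the reweighted posterior $\tilde\eta_i(x)\propto(\g_i/\pi_i)\eta_i(x)$---is exactly the paper's; your Steps 2--3 are the paper's Lemma~\ref{lem:cs-bayes} almost verbatim. The gap is in the closing inference. You construct $f^*$ as a best response to the maximin point $\g^*$ and then assert that ``from $F(f^*,\g^*)=\min_f\max_\g F$ by the minimax equality, $f^*$ is indeed optimal in the original problem.'' That implication is invalid: $F(f^*,\g^*)$ equalling the saddle value does not give $\max_{\g\in\G(\delta)}F(f^*,\g)$ equal to the saddle value; you still need the $\g$-side of the saddle inequality, i.e., that $\g^*$ is a best response to \emph{your} $f^*$. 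An arbitrary selection from $\argmin_f F(f,\g^*)$ need not be minimax optimal. A bilinear toy example (the same structure as $\omega(\g,f)$ here): for $F(f,\g)=(f-\tfrac{1}{2})\g$ on $[0,1]^2$, the maximin point is $\g^*=0$, every $f\in[0,1]$ is a best response to $\g^*$ and attains the saddle value $F(f,0)=0$, yet only $f\le\tfrac{1}{2}$ solves $\min_f\max_\g F$. You flag this issue in your ``Main obstacle'' paragraph and propose to ``verify directly'' the saddle-point property, but that verification is precisely the missing step and is not routine for a merely proper (non-strictly-proper) loss, where the best response to $\g^*$ is not unique.

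The paper closes the argument by running it in the opposite direction: it takes $f^*$ to be a minimizer of the original objective $\max_{\g\in\G(\delta)}\omega(\g,\cdot)$ (after arguing such a minimizer exists), uses the minimax equality to show that this $f^*$ together with $\g^*$ forms a saddle point---in particular $f^*\in\argmin_f\omega(\g^*,f)$---and only then applies the properness lemma to read off the form of $f^*$. That direction establishes the theorem as stated (the optimal solution \emph{has} the claimed form) without ever needing to certify that a candidate of the claimed form is optimal. To repair your version, either assume strict properness so that the best response to $\g^*$ is (a.e.) unique and hence forced to coincide with the minimax-optimal scorer, or reverse the argument as the paper does.
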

\vspace{-0.2in}
The proof is provided in Appendix \ref{app:dre-bayes} and would use in its intermediate step, the following standard result for cost-sensitive learning:
\begin{lemma}
Suppose $\ell(y, z)$ is a proper loss and $\ell_i(f) = \E\left[ \ell(y, f(x)) \,|\, y = i\right]$. 
For any fixed $\g \in \Delta_m$, the following is the minimizer to the objective $\sum_{i=1}^m \g_i \cdot \ell_i(f)$ over all measurable functions $f: \X \rightarrow \Delta_m$: $f^*_y(x) \propto \frac{g_y}{\pi_y}\cdot \eta_y(x).$
\label{lem:cs-bayes}
\end{lemma}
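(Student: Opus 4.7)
The plan is to reduce the weighted risk to a \emph{pointwise} cost-sensitive problem and then invoke the properness of $\ell$ with respect to a suitably reweighted conditional distribution. First, I would rewrite each per-class risk using Bayes' rule, $\P(X=x\mid Y=i)=\eta_i(x)\,\P(X=x)/\pi_i$, so that
\[
g_i\,\ell_i(f) \;=\; g_i\,\E\!\left[\ell(i,f(X))\,\Big|\,Y=i\right] \;=\; \int \ell(i,f(x))\,\frac{g_i\,\eta_i(x)}{\pi_i}\,\P(X=x)\,dx\,.
\]
Summing over $i$ (the terms are nonnegative, so Tonelli applies) gives
\[
\sum_{i=1}^m g_i\,\ell_i(f) \;=\; \E_X\!\left[\,\sum_{i=1}^m \frac{g_i\,\eta_i(X)}{\pi_i}\,\ell(i,f(X))\,\right].
\]

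Second, since $f$ ranges over all measurable $\mathcal{X}\to\Delta_m$, minimizing the above expectation is equivalent to minimizing the integrand pointwise. I would introduce the weights $w_i(x) := g_i\,\eta_i(x)/\pi_i$ with normalizer $S(x) := \sum_{i=1}^m w_i(x)$, and define $\tilde p_i(x) := w_i(x)/S(x)$, which lies in $\Delta_m$ whenever $S(x)>0$. The pointwise objective then factors as $S(x) \cdot \sum_{i=1}^m \tilde p_i(x)\,\ell(i, f(x))$, and since $S(x)$ does not depend on $f(x)$, it suffices to minimize $\E_{i\sim \tilde p(x)}[\ell(i,f(x))]$ over $f(x)\in\Delta_m$.

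Third, I would invoke properness of $\ell$: by definition, for any distribution $q\in\Delta_m$, the map $z\mapsto \E_{i\sim q}[\ell(i,z)]$ is minimized over $z\in\Delta_m$ at $z^\ast = q$. Applying this with $q = \tilde p(x)$ yields $f^\ast(x) = \tilde p(x)$, i.e.\ $f^\ast_y(x) = g_y\,\eta_y(x)/\bigl(\pi_y\,S(x)\bigr) \propto g_y\,\eta_y(x)/\pi_y$, exactly the claimed form.

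I expect the substantive calculations (Bayes' rule, the sum--integral swap, the pointwise factorization) to be routine. The only real subtleties are (i) the degenerate set $\{x : S(x)=0\}$, on which the pointwise objective is zero for every $f(x)\in\Delta_m$ so any choice is optimal and the $\propto$ notation is vacuous, and (ii) a standard measurable-selection argument confirming that $x\mapsto \tilde p(x)$, hence $f^\ast$, is measurable, which is immediate because $\eta_i$ is measurable. Strict properness of $\ell$ would give uniqueness almost everywhere on $\{S(x)>0\}$; for properness alone the stated form is \emph{a} minimizer.
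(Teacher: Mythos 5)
Your proof is correct and follows essentially the same route as the paper's: rewrite the weighted class-conditional risks via Bayes' rule as a single expectation over $X$ of the reweighted pointwise loss $\sum_i \frac{g_i\eta_i(x)}{\pi_i}\ell(i,f(x))$, then invoke properness pointwise to conclude the minimizer is the normalized weight vector. The paper's version is just a terser rendering of the same argument; your additional care about the degenerate set $\{S(x)=0\}$, measurability, and the proper-versus-strictly-proper distinction fills in details the paper leaves implicit.
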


\subsection{Post-Hoc Approach}

Given a pre-trained model $\hat{\eta}: \X \rightarrow \Delta_m$ that estimates the conditional-class probability function $\eta$, we seek to approximate the Bayes-optimal classifier. To do so, we write the Lagrangian form of \eqref{eq:dre} as: $\cL(f, g, \lambda) = \sum_{i=1}^m g_i \cdot \ell_i(f) - \lambda (D(g, u) - \delta).$
We next show the optimization w.r.t. the Lagrangian form has the following equivalent unconstrained problem, as specified in Proposition~\ref{prop:relax}. 
\begin{align}
   \min_{f: \X \> \Delta_m}\,\max_{{\color{black}g\in \Delta_m}}\,\min_{\lambda \geq 0}\,\cL(f, g, \lambda)\,.
   \label{eq:saddle-point}
\end{align}
\begin{prop}\label{prop:relax}
The equivalence of two optimization tasks:
\begin{align*}
     \min_{f: \X \rightarrow \Delta_m} \DRE(f; \delta)  \quad  \Longleftrightarrow\quad  \min_{f: \X \> \Delta_m}\,\max_{{\color{black}g\in \Delta_m}}\,\min_{\lambda \geq 0}\,\cL(f, g, \lambda)\,.
\end{align*}
\end{prop}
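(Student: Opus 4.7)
The plan is to reduce the equivalence to an inner‐level identity that can be proved by the standard Lagrangian trick for a single inequality constraint. More precisely, I will show that for every fixed $f$,
\begin{align*}
\DRE(f;\delta) \;=\; \max_{g\in\G(\delta)}\sum_{i=1}^m g_i\,\ell_i(f)
\;=\; \max_{g\in\Delta_m}\,\min_{\lambda\geq 0}\,\cL(f,g,\lambda),
\end{align*}
after which taking $\min_{f:\X\to\Delta_m}$ on both sides yields the claimed equivalence. This reduction is clean because the outer quantifier $\min_f$ is identical on the two sides, so no minimax swap over $f$ is needed.

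For the inner identity, the key step is a case analysis on whether the candidate $g\in\Delta_m$ satisfies the divergence constraint. Writing $\cL(f,g,\lambda)=\sum_i g_i\,\ell_i(f)-\lambda\bigl(D(g,u)-\delta\bigr)$, I would argue: (i) if $D(g,u)\leq\delta$, then $-\lambda(D(g,u)-\delta)\geq 0$ for every $\lambda\geq 0$, and the minimum over $\lambda\geq 0$ is attained at $\lambda=0$, giving $\min_{\lambda\geq 0}\cL(f,g,\lambda)=\sum_i g_i\,\ell_i(f)$; (ii) if $D(g,u)>\delta$, then $-\lambda(D(g,u)-\delta)\to-\infty$ as $\lambda\to\infty$, so $\min_{\lambda\geq 0}\cL(f,g,\lambda)=-\infty$. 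Taking $\max_{g\in\Delta_m}$ then discards the infeasible $g$'s and recovers exactly $\max_{g\in\G(\delta)}\sum_i g_i\,\ell_i(f)=\DRE(f;\delta)$.

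Finally I would note that $\G(\delta)$ is nonempty (since $u\in\G(\delta)$ for any $\delta\geq 0$ under any divergence with $D(u,u)=0$), so the right-hand maximum is well-defined and the identity is not vacuous. Taking $\min_{f}$ on both sides then gives the stated equivalence.

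The proof is essentially routine; the only place that needs a little care is the case $D(g,u)=\delta$, where one must observe that $\min_{\lambda\geq 0}(-\lambda\cdot 0)=0$ is still attained, so feasible boundary points contribute $\sum_i g_i\,\ell_i(f)$ and are not lost. I do not expect any real obstacle: the argument does not require convexity of $\ell$ or $D$ (those assumptions from Theorem~\ref{thm:dre-bayes} are used elsewhere), only that $D(g,u)-\delta$ changes sign across the feasibility boundary, which is precisely what the Lagrangian penalty encodes.
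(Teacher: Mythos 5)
Your proof is correct, and it takes a genuinely different route from the paper's. You establish the pointwise identity $\max_{g\in\Delta_m}\min_{\lambda\geq 0}\cL(f,g,\lambda)=\max_{g\in\G(\delta)}\sum_i g_i\,\ell_i(f)$ for \emph{every} fixed $f$, via the elementary case analysis on the sign of $D(g,u)-\delta$: feasible $g$ contribute their linear objective (the inner minimum sitting at $\lambda=0$), infeasible $g$ are driven to $-\infty$ and discarded by the outer maximum. Since the two objective functions of $f$ coincide identically, taking $\min_f$ gives not only equal optimal values but the same set of minimizers, with no appeal to convexity, attainment of optima, or minimax swaps. The paper instead argues at the level of optimal triples $(f^*,g^*,\lambda^*)$, running two contradiction arguments that invoke the complementary slackness condition; that route is less self-contained (complementary slackness is asserted rather than derived, and the reverse direction's step from $\lambda^*(D(g^*,u)-\delta)=0$ to $D(g^*,u)=\delta$ is not justified when $\lambda^*=0$), whereas your argument derives the feasibility-enforcing role of $\lambda$ directly from the structure of the Lagrangian. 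The paper's own informal remark after the proposition (``if the constraint is violated, the minimizer over $\lambda$ yields an unbounded objective'') is in fact exactly your proof; you have simply carried it out in full. Your observation about the boundary case $D(g,u)=\delta$ and the nonemptiness of $\G(\delta)$ closes the only two places where care is needed.
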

To understand why the above two optimization tasks are equivalent,

we point out that if the constraint is violated, then the minimizer over $\lambda$ would yield an unbounded objective. The maximizer over $\g$, as a result, will always choose a $\g$ that satisfies the constraint.

Build upon Proposition \ref{prop:relax}, one can then solve the equivalent min-max problem by alternating between a gradient-descent update on $\lambda$, an Exponentiated Gradient (EG)-ascent update~\citep{kivinen1997exponentiated} on $g$, and a full minimization over $f$: $\min_{f: \X \rightarrow \Delta_m}\sum_{i=1}^m g_i \cdot {\ell}_i(f),$ the optimal solution for which is given from Lemma \ref{lem:cs-bayes} by $f^*_y(x) \propto  \frac{g_y}{\pi_y}\cdot {\eta}_y(x)$. Thus, for the pre-trained model $\hat{\eta}$, we propose to make post-hoc adjustments over $\hat{\eta}$ to make the classifiers more robust to prior shifts. We construct the ``post-shifted'' classifier by predicting the class with the ``post-shifted'' highest score for each $x$:
\begin{mybox2}
$$
\textbf{DROPS:   }\quad h(x) \,\in\, \argmax_{i \in [m]}\, \frac{g_i}{\hat{\pi}_i}\cdot \hat{\eta}_i(x)\,.
$$
\end{mybox2}

Intuitively, model prediction of class $i$ is up-scaled if the class $i$ is assigned with a large weight $\g_i$, or is of a small prior $\hat{\pi}_i$. 

{\bf An Empirical Alternative.~ }
Given a sample $x$, the ``post-shifted" classifier takes the form $h(x) \,\in\, \text{argmax}_{i\in [m]}\,  \frac{g_i}{\hat{\pi}_i}\cdot \hat{\eta}_i(x)$. The softmax activation function then maps the per-class mode prediction logit, denoted by $\text{Logit}_i(x)$, into the corresponding probability. Specifically, for $i\in [m]$, we have  $\hat{\eta}_i(x)=\frac{\exp(\text{Logit}_i(x))}{\sum_{j\in [m]} \exp(\text{Logit}_j(x))}$. Thus, an empirical alternative of the proposed DROPS is a logit-adjustment approach where, given a sample $x$, for fixed class weights $\frac{g_i}{\hat{\pi}_i}$, we have: 
\begin{multline*}
    \textbf{DROPS: }h(x) \,\in\, \text{argmax}_{y\in [m]}\, \frac{g_y}{\hat{\pi}_y}\cdot \hat{\eta}_y(x) 
   \Longleftrightarrow h(x) \,\in\, \text{argmax}_{y\in [m]}\, \text{Logit}_y(x)+ \log(\frac{g_y}{\hat{\pi}_y})\,.
\end{multline*}

\subsection{Convergence of Post-hoc Classifier}

In practice, we only have access to an empirical estimate of the Lagrangian form computed using a validation set $S = \{(\tilde{x}_1, \tilde{y}_1), \ldots, (\tilde{x}_n, \tilde{y}_n)\}$:
\begin{align}
    \hat{\cL}(f, g, \lambda) = \sum_{i=1}^m g_i \cdot \hat{\ell}_i(f) - \lambda (D(g, u) - \delta)\,,
    \label{eq:lagrangian-emp}
\end{align}
where $\hat{\ell}_i = \frac{1}{\hat{\pi}_i}\sum_{(x, y) \in S: y = i} \ell(y, f(x))$ and $\hat{\pi}_i = \frac{1}{n}|\{(x, y) \in S: y = i\}|.$ We can then approximately solve the saddle-point problem in \eqref{eq:saddle-point} by repeating the following three steps for $T$ number of iterations:

{\bf Step 1: updating $\lambda^{(t)}$.~ } Given the step size $\eta_\lambda > 0$, we perform gradient updates on $\lambda$ through:
    \[
        \lambda^{(t+1)} \,=\, \lambda^{(t)} - \eta_\lambda\nabla\mathcal{L}(f^{(t)},  \g^{(t)}, \lambda^{(t)}) = \lambda^{(t)} - \eta_\lambda(\delta - D(\g^{(t)}, u))\,.
    \]
For KL divergence, the updated $\lambda^{(t+1)}$ amounts to $\lambda^{(t+1)}\leftarrow\lambda^{(t)} - \eta_\lambda (\delta-\sum_{i}\g_i^{(t)} \log(\frac{\g_i^{(t)}}{u_i}))$. We can clip $\lambda$ if the update falls out of the desired range: $\lambda^{(t+1)}\leftarrow \clip_{[0, R]}\left(\lambda^{(t)} - \eta_\lambda(\delta - D(g, u))\right)$ for some $R > 0$.

{\bf Step 2: updating $\g^{(t)}$.~ } Given the step size $\eta_\g> 0$, we perform gradient updates on $\g$ through:
\[g^{(t+1)}_i\leftarrow g^{(t)}_i \cdot \exp\left( \eta_g \cdot \nabla_{g_i}\hat{\cL}(f^{(t)}, g^{(t)}, \lambda^{(t)}) \right).\]
To obtain the updates for $\g$, we consider the fixed $\lambda^{(t)}\in \mathbb{R}_+$ and adopt a simplified version as $g^{(t+1)}_i\leftarrow u_i \cdot \exp\left( \hat{\ell}_i(f^{(t)}) / \lambda^{(t)}) \right)$ empirically.

{\bf Step 3: scaling the predictions.~ } 
In this step, we apply the post-hoc shifts on the classifier's prediction through $f^{(t+1)}_y(x) \propto  \frac{g^{(t+1)}_y}{\hat{\pi}_y}\cdot \hat{\eta}_y(x)$, in which we use the pre-trained estimate $\hat{\eta}$ of the conditional-class probabilities. 

The final scorer returned is an average of scorers from each iteration: $\bar{f}(x) = \frac{1}{T}\sum_{t=1}^T f^{(t)}( x )$.

We now provide a convergence guarantee for this procedure. For a fixed pre-trained model $\hat{\eta}: \X \rightarrow \Delta_m$, 
we denote the class of post-hoc adjusted functions derived from it by 
$$\cF = \left\{f: x \mapsto \left( \frac{ \beta_1 \hat{\eta}_1(x) }{ \sum_{j} \beta_j \hat{\eta}_j(x) }, \ldots, \frac{ \beta_m \hat{\eta}_m(x) }{ \sum_{j} \beta_j \hat{\eta}_j(x) } \right) \,|\, \beta \in \Delta_m \right\},$$ 
and we let $|\cF|$ denote a measure of complexity of this function class.

\allowdisplaybreaks
\begin{theorem}
\label{thm:posthoc-convergence}
Fix $\delta > 0$. Suppose $\ell(y, z)$ be a proper loss that is convex and $L_\ell$-Lipschitz in its second argument, and $\ell(y, z) \leq B_\ell$, for some $B_\ell > 0$.  
Suppose $D(g, u) \leq B_D,$ for some $B_D > 0$, and $D$ is convex and $L_D$-Lipschitz in its first argument. 
Furthermore, suppose $\max_{y \in [m]}\frac{1}{\pi_y} \leq Z$, for some $Z > 0$. 
Then when 
$n \geq 8Z\log(2m/\alpha)$, and
we set $T=\cO(n)$, $R=2B_\ell Z/\delta$, $\eta_\lambda = \frac{R}{B_D\sqrt{n}}$ and $\eta_g = \frac{1}{2B_\ell Z + RL_D}\sqrt{\frac{\log(m)}{{n}}}$,
we have with probability at least $1 - \alpha$ over draw of validation sample $S \sim \cD^n$, the classifier returned
$\bar{f}(x) = \frac{1}{T}\sum_{t=1}^T f^{(t)}(x)$ satisfies:
\[
\DRE(\bar{f}; \delta) \,\leq\, \min_{f: \X \rightarrow \Delta_m} \DRE(f; \delta) \,+\,
    \cO\left(\sqrt{ \frac{\log(m|\cF|/\alpha)}{n} }
\,+\, 
 \E_{x}\left[ \|\hat{\eta}(x) - \eta(x)\|_1 \right]
\right).
\]
\end{theorem}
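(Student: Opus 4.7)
The plan is to decompose the excess risk $\DRE(\bar f;\delta) - \min_f \DRE(f;\delta)$ into three pieces: (i) an \emph{optimization error} from running the three-player saddle-point dynamics on the empirical Lagrangian $\hat{\cL}$ for $T$ steps, (ii) a \emph{statistical error} from replacing population losses $\ell_i$ and priors $\pi_i$ by their empirical counterparts $\hat\ell_i,\hat\pi_i$ on a sample of size $n$, and (iii) a \emph{bias error} arising because Step 3 of the algorithm computes the best response using the plug-in $\hat\eta$ in place of the true $\eta$ prescribed by Lemma~\ref{lem:cs-bayes}. By Proposition~\ref{prop:relax}, controlling the saddle-point gap of $\hat{\cL}$ directly controls the empirical $\DRE$, and then the statistical and bias pieces transport this guarantee to the population objective.

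For the optimization piece, I would treat the iterates as a no-regret game on $\hat\cL(f,g,\lambda)$. The $\lambda$-player runs clipped online gradient descent on $[0,R]$ on a linear objective with gradient bounded by $B_D$, giving regret $\cO(RB_D/\sqrt{T})$ for $\eta_\lambda=R/(B_D\sqrt T)$. The $g$-player runs Exponentiated Gradient on $\Delta_m$; on the event $\hat\pi_i\geq \pi_i/2$ we have $|\nabla_{g_i}\hat\cL|\leq \hat\ell_i(f)+\lambda L_D \leq 2B_\ell Z + RL_D$, and EG gives regret $\cO\bigl((2B_\ell Z+RL_D)\sqrt{\log m/T}\bigr)$ for the prescribed $\eta_g$. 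The $f$-player plays exactly the cost-sensitive best response from Lemma~\ref{lem:cs-bayes} (with $\hat\eta,\hat\pi$ substituted), contributing zero regret. A Freund--Schapire style averaging then certifies that $(\bar f,\bar g,\bar\lambda)$ is an $\cO(1/\sqrt T)$-approximate saddle point of $\hat\cL$, and hence $\max_{g\in\G(\delta)}\sum_i g_i\hat\ell_i(\bar f) \leq \min_f\max_{g\in\G(\delta)}\sum_i g_i\hat\ell_i(f)+\cO(1/\sqrt T)$. Choosing $R=2B_\ell Z/\delta$ ensures the unclipped optimal multiplier lies inside $[0,R]$ whenever the constraint is active.

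For the statistical piece, a multiplicative Chernoff bound combined with $n\geq 8Z\log(2m/\alpha)$ gives $\hat\pi_i\geq \pi_i/2$ simultaneously for all $i$ with probability $1-\alpha/2$. Conditional on this event, each $\hat\ell_i$ is an average of $[0,B_\ell]$-valued random variables rescaled by $1/\hat\pi_i\leq 2Z$, so a standard covering/Rademacher argument over the post-hoc class $\cF$ yields $\sup_{f\in\cF, i\in[m]}|\hat\ell_i(f)-\ell_i(f)| = \cO\!\left(\sqrt{\log(m|\cF|/\alpha)/n}\right)$ with probability $1-\alpha/2$. This upgrades the empirical saddle-point guarantee to a population one. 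For the bias piece, I use that $\ell$ is $L_\ell$-Lipschitz in its second argument: the $f^{(t+1)}$ produced by Step 3 equals the population best-response scorer with $\hat\eta$ substituted for $\eta$, and a direct computation shows the per-iterate excess cost-sensitive risk is bounded by $L_\ell \,\E_x\!\left[\|\hat\eta(x)-\eta(x)\|_1\right]$ up to a constant depending on $Z$. Since this bound is uniform in $t$, it survives the averaging over $t$ and over $g$ in $\G(\delta)$. Setting $T=\cO(n)$ balances the $\cO(1/\sqrt T)$ optimization error with the $\cO(1/\sqrt n)$ statistical error, yielding the stated rate.

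The main obstacle will be the clean separation between the bias term and the optimization/statistical terms: the $f$-player's best response depends on $\hat\eta$ and on the current $g^{(t)}$, so one must verify that the $\hat\eta$-bias does not compound multiplicatively across the $T$ iterations but enters only additively, and that the weights $g^{(t)}_i/\hat\pi_i$ remain uniformly bounded (via the clipping of $\lambda$ and the lower bound $\hat\pi_i\geq \pi_i/2$) so that the post-hoc function class $\cF$ has the complexity $|\cF|$ assumed in the bound. Once these two invariants are established, the three error sources decouple and the stated bound follows from the parameter choices.
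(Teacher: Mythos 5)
Your proposal is correct and follows essentially the same route as the paper's proof: the same decomposition into optimization regret (OGD for $\lambda$ with regret $\cO(RB_D/\sqrt{T})$, EG for $g$ with gradient bound $2B_\ell Z + RL_D$, exact best response for $f$), the same Chernoff-based event $\hat\pi_i \geq \pi_i/2$, the same role for $R = 2B_\ell Z/\delta$ in justifying the bounded-multiplier Lagrangian equivalence, the same uniform-convergence term over $\cF \times [m]$, and the same additive plug-in bias term for $\hat\eta$. The only cosmetic difference is that the paper bounds the per-iterate bias via the loss bound $B_\ell$ (yielding $B_\ell Z\,\E_x[\|\hat\eta(x)-\eta(x)\|_1]$) rather than the Lipschitz constant $L_\ell$, and passes from iterate averages to $\bar f$, $\bar g$, $\bar\lambda$ explicitly via Jensen's inequality and a min--max exchange rather than invoking Freund--Schapire by name.
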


The proof is provided in Appendix \ref{app:posthoc-convergence}. The complexity measure $|\cF|$ can be further bounded using, for example, standard covering number  arguments \citep{Shalev-Shwartz14}. Notice that the convergence to the optimal classifier is bounded by two terms: the first is a sample complexity term that goes to 0 as the number of validation samples $n \rightarrow \infty$; the second term measures how well the pre-trained model $\hat{\eta}$ is calibrated, i.e., how well its scores match the underlying conditional-class probabilities.

\subsection{Robust Post-hoc Classifiers under Group Prior Shifts}
We now introduce the variant of our approach to address group prior shifts. We are interested in the performance of the trained classifiers with additional attribute information available in a held-out validation set. 
Note that the settings of class and group prior shifts differ in the knowledge of group information at validation and test time. Thus, the first variant of DROPS under group prior shifts is the same as the class prior setting: DROPS completely ignores the per-example group information and post-shifts the model predictions using only class labels. 
We consider another variant where we have access to the attribute information during both validation and test. This scenario can actually arise in practice when the attribute information is readily available for test examples, such as device or sensor  type in the case of data coming from different devices/sensors. In this case, the attribute-specific class priors take the form of $\pi_{a,i} = \P(y=i|a)$. DROPS can be naturally adapted to this setting by learning multiple sets of scaling adjustments, one for each attribute type, and using the scaling adjustment corresponding to the attribute of the test example at prediction time. 
We provide the discussion, and the form of the Bayes-optimal classifier for this setting in Appendix \ref{app:group-bayes}.

\section{Related Work}

{\bf Class-Imbalanced Learning.~ } Most work in class-imbalanced learning is typically interested in generalizing on a uniform class prior when the training data has a skewed or imbalanced class prior. 
Existing solutions to the class imbalanced learning problem could be categorized into three major lines: (1) Information augmentation methods, which make use of additional information such as open set data~\citep{wei2021open}, adopt a transfer learning approach to enrich the representation on the tail classes~\citep{liu2020deep,yin2019feature}, or use advanced data-augmentation techniques \citep{perez2017effectiveness,shorten2019survey}; (2) Module improvement methods, \ie, the decoupled training on the head/tail classes~\citep{kang2019decoupling,chu2020feature,zhong2021improving},  or through an ensemble way to make use of multiple networks with different expertise/concentration~\citep{zhou2020bbn,guo2021long,wang2020devil}; (3) The most related to our work are the class re-balancing based methods, which mitigate the impact of class-imbalanced data by adjusting the logits using the class prior~\citep{menon2021longtail}, or align the distributions of the model prediction and a set of balanced validation set \citep{zhang2021distribution}, or modify the loss values by referring to the label frequency~\citep{ren2020balanced}, sample influence \citep{park2021influence}, among many other robust loss designs \citep{amid2019robust,wei2021when,zhu2021second} and re-weighting schemes \citep{kumar2021constrained,cheng2021learning,wei2022smooth}.  
Label shift is a related problem setting where the training class prior is not so imbalanced but the class prior shifts during test time with $p(x|y)$ staying the same, and the goal is to mitigate the effect of this shift \citep{lipton2018detecting,azizzadenesheli2019regularized,alexandari2020maximum}.
However, our goal differs from these methods as we are primarily interested in generalization at worst case variations around the target distribution. 

{\bf Group Distributional Robustness.~ } 
It has been observed that classifiers learned with regular ERM are vulnerable to spurious correlations between the attributes and labels, and tend to perform worse when the test data comes from a shifted group prior. 
Most methods for group distributional robustness are interested in the average or worst group performance. 
Several prior works utilize the group information at the training time \citep{sagawa2019distributionally,piratla2022focus}. Recent works consider a more practical setting where the classifier does not have access to the group information at the training time, \ie., data re-balancing \citep{idrissi2022simple} or re-weighting high loss examples \citep{liu2021just}, or logit-correction \citep{liu2023avoiding}, vision transformer (ViT) models \citep{ghosal2022vision}.  
It was also observed recently that ERM is able to learn features that can be reused for group robustness by training a linear classifier using a balanced held-out set \citep{kirichenko2022last,rosenfeld2022domain}. 
{\color{black}Most relevant to us is AdvShift \citep{zhang2021coping}, which mitigates the impact of label shifts by optimizing a distributionally robust objective function with respect to the model parameters.}
Out work is in similar vein in assuming access to only a held-out set with group annotations, however our method is more lightweight and allows to optimize for varying worst-case perturbations around the target distribution of interest using only post-hoc scaling of model predictions. Other related work includes CGD  \citep{piratla2022focus} that proposes a learning paradigm for optimizing the average group accuracy, and 
Invariant risk minimization (IRM) \citep{arjovsky2019invariant,ahuja2020invariant} that aims to learn core features using data from multiple environments for mitigating the spurious correlations.

\section{Experiments}
In this section, we empirically demonstrate the
effectiveness of our proposed method DROPS, for the tasks of class-imbalanced learning and group distributional robustness. 
\subsection{Experiments on Class-imbalanced Learning}
 We consider the class-imbalanced task to illustrate the robustness of DROPS under class prior shifts. For CIFAR-10, and CIFAR-100 datasets, we down-sample the number of samples for each class to simulate the class-imbalance as done in earlier works \citep{cui2019class,cao2019learning}. We define the imbalance ratio as $\rho:=\frac{\max_{y\in [m]} \pi_y}{\min_{y\in [m]} \pi_y}$ and experiment with $\rho\in [10, 50, 100]$. 
 
 To demonstrate the effectiveness of DROPS, we compare the performance of our proposed method with several popular class-imbalanced learning approaches, including: Cross-Entropy (CE) loss, Focal Loss \citep{lin2017focal}, Class-Balanced (CB) loss \citep{cui2019class}, LDAM \citep{cao2019learning}, Balanced-Softmax \citep{ren2020balanced},  Logit-adjustment \citep{menon2021longtail}, and AdvShift \citep{zhang2021coping}
 on the original test data. All methods are trained with the same architecture ({\color{black}PreAct-}ResNet 18 \citep{he2016deep}) with 5 random seeds, same data augmentation techniques, the same SGD optimizer with a momentum of
0.9 with Nesterov acceleration. All methods share the same initial learning rate of 0.1 and a piece-wise constant learning
rate schedule of $[10^{-2}, 10^{-3}, 10^{-4}]$ at $[30, 80, 110]$ epochs, respectively. We use a batch size of 128 for all methods
and train the model for 140 epochs. A balanced held-out validation set is utilized for hyper-parameter tuning. {\color{black} All baseline models are picked by referring to the $\delta=1.0$-worst case performances on 
the validation set, which is made up of the last 10\% of the original CIFAR training dataset.} DROPS obtains the optimal post-hoc shifts under a variety of $\delta_{\text{train}}$ parameters, which is the perturbation hyper-parameter $\delta$ in the Lagrangian of \eqref{eq:saddle-point}. 
 We take the divergence $D$ to be the KL-divergence. 
 
 \begin{table*}[!htb]
	\vspace{-0.15in}
	\caption{{\color{black}Performance comparisons on class-imbalanced CIFAR datasets: mean $\pm$ std of \text{averaged class accuracy}, \text{$\delta=1.0$-worst case accuracy}, and \text{worst class accuracy} of 5 runs are reported. Best performed methods ({\color{black} corresponds to the averaged accuracy of 5 runs}) in each setting are highlighted in purple. {\color{black} And we defer the paired student t-test between each baseline method and DROPS in Appendix (Table \ref{tab:t_test}), to demonstrate the robustness of DROPS.}
	}}
	\begin{center}
    \scalebox{0.75}{\begin{tabular}{c|ccc|ccc}
    	\hline 
	\multirow{2}{*}{\textbf{Method}}  & \multicolumn{3}{c}{\textbf{\emph{CIFAR-10} (Averaged)}}  & \multicolumn{3}{c}{\textbf{\emph{CIFAR-100} (Averaged)}} \\ 
				  &$\rho = 10$ &$\rho = 50$&$\rho= 100$ &$\rho = 10$ &$\rho = 50$&$\rho= 100$ \\
				\hline
				Cross Entropy & 86.51$\pm$0.17  & 75.74$\pm$0.97  & 69.28$\pm$0.94  & 55.59$\pm$0.50  & 43.39$\pm$0.76  & 37.96$\pm$0.27\\
				Focal & 75.34$\pm$4.65  & 58.55$\pm$3.34  & 46.23$\pm$2.11  & 51.52$\pm$0.93  & 32.68$\pm$0.75  & 26.53$\pm$0.74 \\
				LDAM & 86.63$\pm$0.33  & 76.00$\pm$0.78  & 68.81$\pm$1.01  & 55.42$\pm$0.67  & 43.03$\pm$0.68  & 38.35$\pm$0.63 \\
				Balanced-Softmax & 87.18$\pm$0.24  & 78.76$\pm$0.44  & {\color{black} 35.55$\pm$0.35}  & 56.37$\pm$0.54  & 45.10$\pm$0.51  & 41.21$\pm$0.23 \\
				AdvShift & 86.33$\pm$0.21 & 74.74$\pm$0.54 & 69.41$\pm$0.83 & 55.38$\pm$0.47 & 42.71$\pm$0.96 & 38.44$\pm$0.58\\
				Logit-Adjust& 89.01$\pm$0.30  & 82.60$\pm$0.32  & 77.49$\pm$0.41  & 59.18$\pm$0.30  & 47.19$\pm$1.33  & 43.03$\pm$0.94\\
					Logit-Adj (post-hoc) & 88.98$\pm$0.29  & 82.48$\pm$0.75  & 77.94$\pm$0.50  & 59.33$\pm$0.51  & \good\textbf{47.77$\pm$1.49}  & \good\textbf{43.68$\pm$1.08} \\
		   	\hline
		   	
				\hline
			  {\textbf{DROPS}} ($\delta=0.9$)   & \good \textbf{89.17$\pm$0.24}  &  \good \textbf{83.12$\pm$0.45}  &  \good \textbf{80.15$\pm$0.50}  & \good \textbf{59.69$\pm$0.39}  &  \good \textbf{47.83$\pm$0.80}  & 43.20$\pm$0.78 \\
			\hline 
	\multirow{2}{*}{\textbf{Method}}  & \multicolumn{3}{c}{\textbf{\emph{CIFAR-10} ($\delta=1.0$-worst case acc)}}  & \multicolumn{3}{c}{\textbf{\emph{CIFAR-100} ($\delta=1.0$-worst case acc)}} \\ 
				  &$\rho = 10$ &$\rho = 50$&$\rho= 100$ &$\rho = 10$ &$\rho = 50$&$\rho= 100$ \\
				\hline
				Cross Entropy & 79.98$\pm$0.13  & 59.48$\pm$1.75  & 43.30$\pm$3.63  & 27.06$\pm$0.53  & 11.24$\pm$0.97  & 6.52$\pm$0.18 \\
				Focal & 64.96$\pm$5.67  & 43.04$\pm$5.05  & 30.46$\pm$3.13  & 23.62$\pm$1.19  & 6.32$\pm$0.57  & 3.42$\pm$0.24 \\
				LDAM & 79.68$\pm$0.61  & 59.58$\pm$3.06  & 42.94$\pm$2.42  & 27.44$\pm$1.56  & 11.26$\pm$0.60  & 6.62$\pm$0.71\\
				Balanced-Softmax & 80.72$\pm$0.51  & 68.34$\pm$0.55  & {\color{black}24.10$\pm$0.33}  & 30.36$\pm$0.81  & 17.06$\pm$0.47  & 12.00$\pm$0.52 \\
				AdvShift & 80.70$\pm$0.43 & 60.36$\pm$0.90 &48.18$\pm$1.91 & 29.37$\pm$0.94 & 14.65$\pm$0.62 & 10.71$\pm$0.82\\
				Logit-Adjust & 82.18$\pm$0.46  & 73.50$\pm$1.09  & 63.62$\pm$1.70  & 28.02$\pm$1.83  & 15.86$\pm$1.26  & 11.66$\pm$1.40 \\
					Logit-Adj (post-hoc) & 82.52$\pm$0.47  & 73.84$\pm$1.69  & 64.54$\pm$1.33  & 33.46$\pm$1.26  & 17.96$\pm$1.08  & 14.34$\pm$0.53 \\
				\hline
				   {\textbf{DROPS}} ($\delta=0.9$)   & \good \textbf{86.20$\pm$0.34}  & \good \textbf{79.40$\pm$0.57}  & \good \textbf{75.46$\pm$0.44}  & \good \textbf{44.96$\pm$0.52}  &  \good \textbf{30.12$\pm$0.66}  & \good \textbf{25.58$\pm$0.50} \\
			\hline 
	\multirow{2}{*}{\textbf{Method}}  & \multicolumn{3}{c}{\textbf{\emph{CIFAR-10} (Worst)}}  & \multicolumn{3}{c}{\textbf{\emph{CIFAR-100} (Worst)}} \\ 
				  &$\rho = 10$ &$\rho = 50$&$\rho= 100$ &$\rho = 10$ &$\rho = 50$&$\rho= 100$ \\
				\hline
				Cross Entropy &  78.29$\pm$0.86  & 55.32$\pm$3.58  & 36.00$\pm$6.11  & 9.79$\pm$2.57  & 1.38$\pm$0.91  & 0.00$\pm$0.00\\
				Focal &  63.69$\pm$5.99  & 40.73$\pm$4.98  & 28.16$\pm$3.94  & 8.65$\pm$3.58  & 0.00$\pm$0.00  & 0.00$\pm$0.00 \\
				LDAM &  76.87$\pm$0.99  & 56.57$\pm$3.18  & 35.99$\pm$2.72  & 10.57$\pm$2.65  & 1.82$\pm$1.16  & 0.27$\pm$0.50\\
				Balanced-Softmax & 78.86$\pm$0.84  & 67.11$\pm$0.98  & {\color{black}22.63$\pm$0.29}  & 16.17$\pm$3.85  & 6.05$\pm$2.24  & 2.58$\pm$1.42\\
				AdvShift & 79.60$\pm$0.93& 57.71$\pm$1.60 & 44.70$\pm$2.49 & 13.38$\pm$2.36 & 3.18$\pm$1.29 & 2.05$\pm$1.32\\
				Logit-Adjust &80.54$\pm$1.11  & 71.24$\pm$1.72  & 61.18$\pm$2.07  & 15.99$\pm$3.52  & 5.44$\pm$1.31  & 1.87$\pm$1.21 \\
					Logit-Adj (post-hoc) &81.41$\pm$1.24  & 72.00$\pm$1.72  & 63.16$\pm$2.43  & 16.21$\pm$3.93  & 4.47$\pm$1.71  & 3.39$\pm$1.46\\
				\hline
			  {\textbf{DROPS}} ($\delta=0.9$)   &  \good \textbf{82.22$\pm$1.30}  & \good \textbf{76.33$\pm$0.89}  & \good \textbf{71.62$\pm$1.34}  & \good \textbf{25.98$\pm$3.95}  &  \good \textbf{11.65$\pm$3.05}  & \good \textbf{8.61$\pm$2.22} \\
		   	\hline
	\end{tabular}}
			\end{center}\label{tab:class_shift}
	\end{table*}

{\bf Performances Comparisons on the Generic $\delta\text{-Worst Case Accuracy}$.~ }
Experiment results in Table \ref{tab:class_shift} demonstrate that DROPS can not only give promising performance under the reported $\delta\text{-worst case accuracy}$ (for $\delta=1.0$), it also outperforms other methods in the worst case accuracy, and remains competitive on the mean accuracy as well (performs best on the CIFAR-10 setting).

To further investigate the robustness of each method under different level of prior shifts, we visualize the performance of each method under a list of $\delta$ within the $\delta$-worst case accuracy. For the uniform distribution $\uu=u$, the $\delta$ values recover both the mean accuracy (with $\delta=0$) and the worst accuracy (for large enough $\delta$), and interpolates between these two metrics for other values of $\delta$. 
For DROPS, we train using $\delta_{\text{train}}\in [0.5, 1.0, 1.5, 2.0]$ (CIFAR-10) and $\delta_{\text{train}}\in [0.5, 1.0, 2.0, 3.0, 4.0]$ (CIFAR-100) with KL-divergence while evaluating for the aforementioned range of $\delta_{\text{eval}}$ values. We also evaluate using Reverse-KL divergence to examine the behavior under a different divergence function from that used in learning the scaling adjustments with DROPS.
Figure \ref{fig:cifar} illustrates the robustness and effectiveness of our proposed DROPS method: specifically, in each sub-figure of Figure \ref{fig:cifar}, the $x$-axis indicates the value of $\delta$, and the $y$-axis denotes the corresponding $\delta$-worse case accuracy.
For experiment results on CIFAR-10 datasets (1st row), the curves of DROPS under different $\delta_{\text{train}}$ are consistently higher than the other baselines, indicating that with the increasing of perturbation level of the distribution shifts (from left to right in each sub-figure), DROPS is more robust to the distribution shift. As for CIFAR-100 dataset, we do observe that optimizing for the controlled worst case performance may lead to a trade-off in the worst case accuracy and averaged test accuracy, i.e., Logit-Adj is more competitive than DROPS in the measure of the averaged accuracy, while DROPS still suffers less from the increase of perturbation level.

\begin{figure*}[!htb]
\centering
\includegraphics[width=0.48\textwidth]{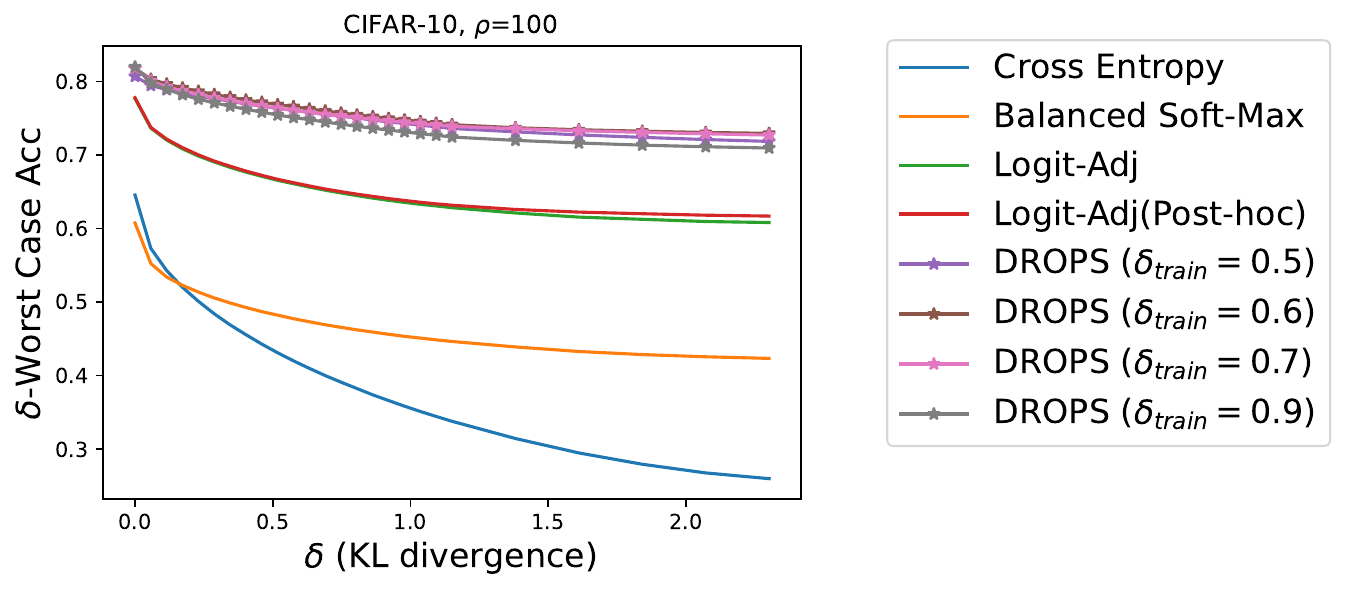}
\includegraphics[width=0.48\textwidth]{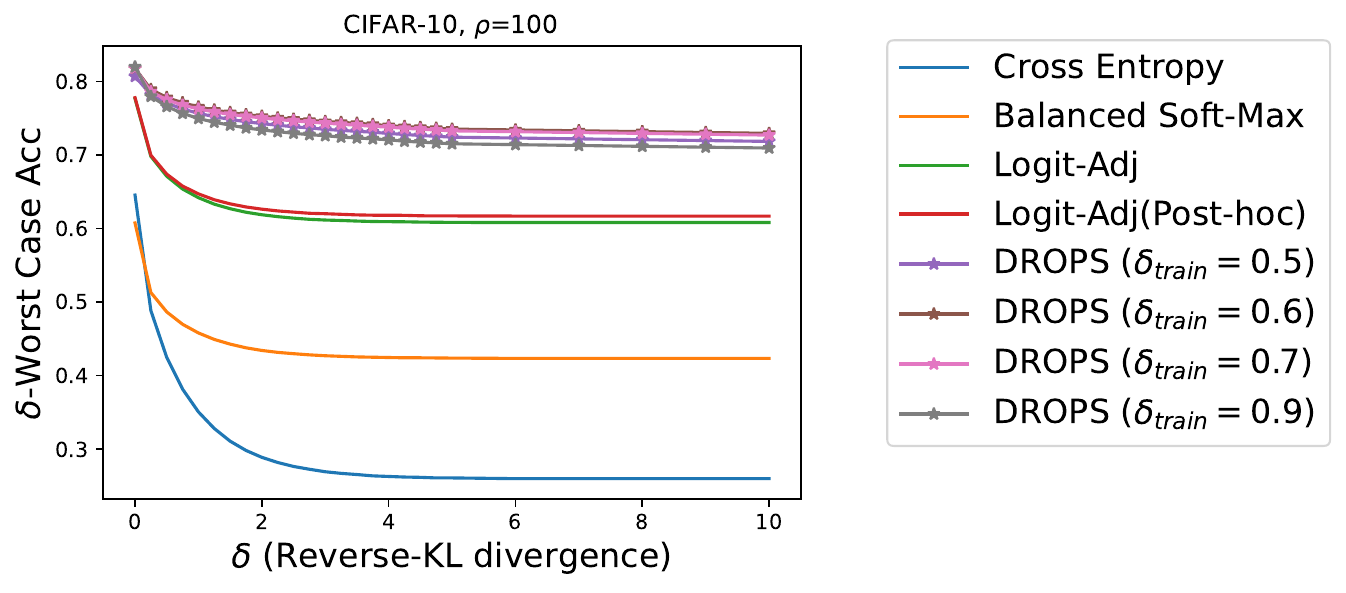}
\includegraphics[width=0.48\textwidth]{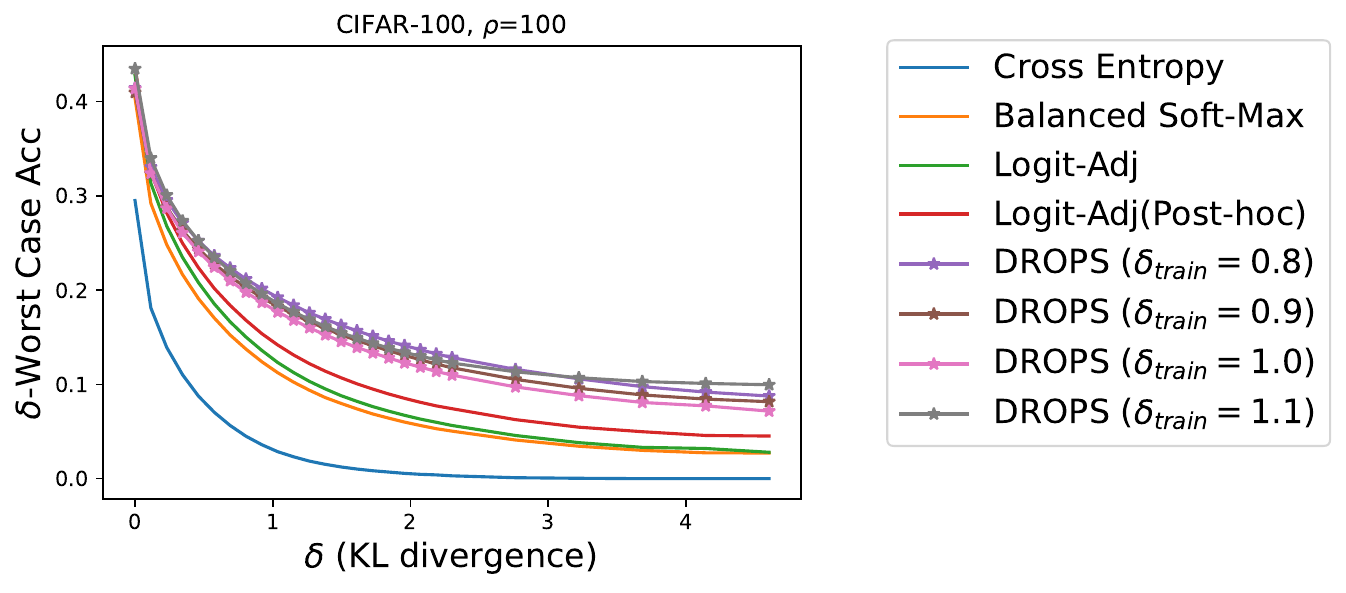}
\includegraphics[width=0.48\textwidth]{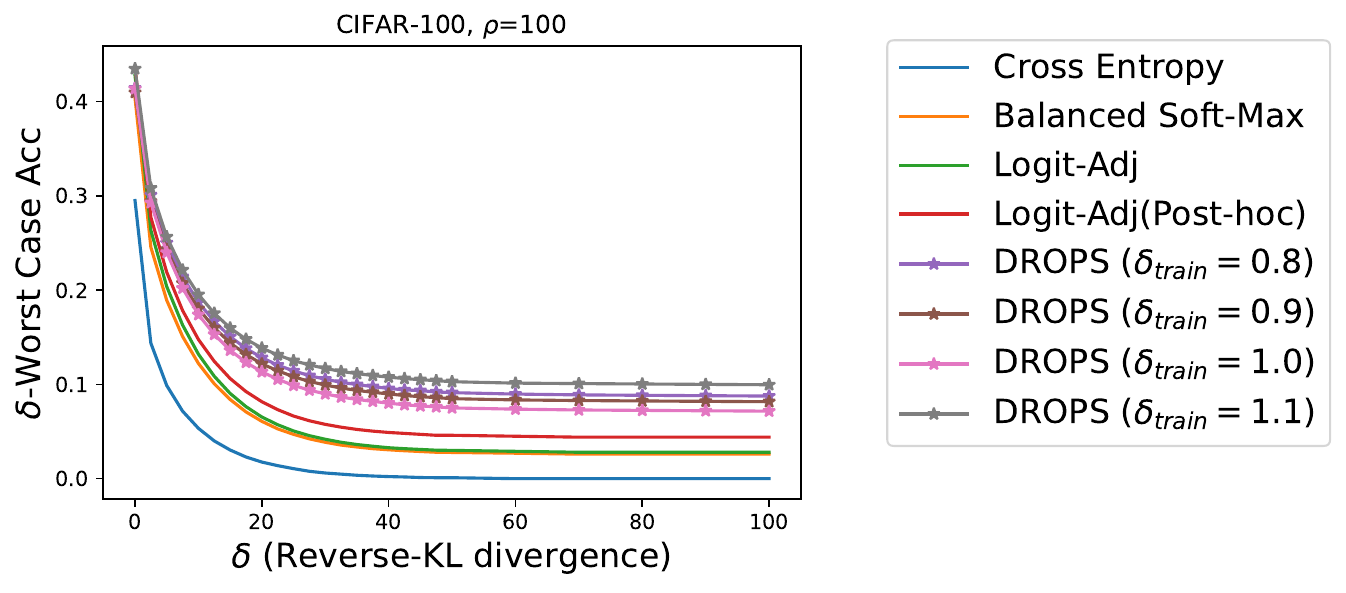}
\vspace{-0.05in}
\caption{Performance/Robustness of methods under different perturbation $\delta$-worst case accuracy. (1st row: CIFAR-10 with imbalance ratio $\rho=100$, adopt the KL-divergence measure (left) and Reverse-KL divergence (right) for the $\delta$-worst case accuracy calculation; 
2nd row: CIFAR-100 with imbalance ratio $\rho=100$, adopt the KL-divergence measure (left) and Reverse-KL divergence (right) for the $\delta$-worst case accuracy calculation.)}\label{fig:cifar}
\end{figure*}

 \subsection{Experiments on Group Robustness Tasks}
We consider the group robustness tasks to show the robustness of DROPS under group prior shifts. For waterbirds and CelebA datasets, we obtain the datasets as described in \citep{sagawa2019distributionally}. 

We compare the performance of our proposed method with several popular group-Dro approaches, i.e., Just-train-twice (JTT) \citep{liu2021just}, Group Distributional Robustness (G-DRO) \citep{sagawa2019distributionally}, data balancing strategies (SUBG) \citep{idrissi2021simple}, last-layer re-training (DFR) \citep{kirichenko2022last}. Among the baseline methods, ERM, JTT, G-DRO, and SUBG are trained with the same architecture (ResNet 50 \citep{he2016deep}) for 5 runs, the base model for DFR and our approach DROPS also used the same ERM pre-trained model. Note that the knowledge requirement over group information and validation set differs among the reported methods, we clarify the differences in the column ``Group Info'' of  Table \ref{tab:dro_exp}. 

As for the performance of DROPS in the group DRO datasets, since there are only two classes within the two datasets, we define a single post-hoc scalar as $w$, which is considered to be a hyper-parameter by scaling the classifier's prediction on class 1. Note that the spurious features (minority groups among each class) tend to have a significant impact on the ERM performances by referring to the imbalanced group distribution. We use DROPS to learn post-doc corrections for both ERM trained model and for the DFR \citep{kirichenko2022last} model that additionally retrains the last layer of the ERM-trained model with a balanced  held-out set. We refer to the post-hoc corrected DFR as DROPS* in the result tables. 

{\bf Performance Comparisons on various $\delta$-Worst  Accuracies.~}
 Post-hoc scaling on the ERM model (referred as DROPS in Table \ref{tab:dro_exp}) improves the robustness of the model as measured at various $\delta$ values (including $\delta=0$ and $\delta\to\infty$). 
DROPS also outperforms JTT on the CelebA dataset. 
When applying post-hoc scaling to a better pre-trained model, i.e., with DFR \citep{kirichenko2022last} that re-trains the last layer on the validation set, DROPS* 
outperforms all baseline methods in most settings, with the performance improvement especially clear on the CelebA dataset.

\begin{table*}[!htb]
	\vspace{-0.15in}
	\caption{{\color{black} Performance comparisons on Waterbirds and CelebA:  averaged accuracy with training prior weight and uniform weight, $\delta$-worst accuracy for several $\delta$ perturbations, and worst group-level test accuracy are reported. The \emph{Group Info} column indicates whether the group labels are available to the methods on train/validation sets. The symbol \cmark\cmark means the method also re-trains the last layer on the validation set (w/o needing the group information for training). 
	}}
	\begin{center}
    \scalebox{0.68}{\begin{tabular}{c|cc|c|ccccccc}
			\hline 
	\multirow{2}{*}{\textbf{Method}}  &  \multicolumn{2}{c}{\textbf{Group Info}} &\multicolumn{1}{c}{(Train prior)} & \multicolumn{6}{c}{\textbf{\emph{Waterbirds}} ($\uu\rightarrow$ uniform prior)} \\ 
			&\textbf{Train} &\textbf{Val}  	  &\textbf{Averaged} &\textbf{Averaged}  & \textbf{ $\delta=0.05$-Worst}&\textbf{$\delta=0.1$-Worst}& \textbf{$\delta=0.2$-Worst}& \textbf{$\delta=0.5$-Worst} & \textbf{Worst}   \\
				\hline\hline
				ERM &\xmark & \cmark & \good\textbf{98.08$\pm$0.20}&88.09$\pm$0.90&84.31$\pm$1.24&82.71$\pm$1.41&80.51$\pm$1.68&76.65$\pm$2.25&70.65$\pm$3.32\\
				JTT   &\xmark & \cmark & 93.05$\pm$0.36&89.56$\pm$0.69&88.59$\pm$0.74&88.24$\pm$0.75&87.81$\pm$0.77&87.13$\pm$0.85&86.18$\pm$1.08  \\
			\textbf{DROPS} &\xmark & \cmark & 97.95$\pm$0.16&88.55$\pm$1.15&85.50$\pm$1.51&84.33$\pm$1.64&82.81$\pm$1.79&80.41$\pm$2.00&77.14$\pm$2.15\\ \hline 	G-DRO &\cmark & \cmark & 93.03$\pm$0.34&91.67$\pm$0.22&91.23$\pm$0.33&91.06$\pm$0.34&90.83$\pm$0.40&90.44$\pm$0.52&89.85$\pm$0.73\\
				 SUBG &\cmark & \cmark & 91.97$\pm$0.50&90.05$\pm$0.44&89.46$\pm$0.40&89.24$\pm$0.39&88.98$\pm$0.40&88.59$\pm$0.49&88.12$\pm$0.76\\
		$\text{DFR}_{\text{Tr}}^{\text{Tr}}$ &\cmark & \cmark & 95.83$\pm$0.94&93.45$\pm$0.49&92.77$\pm$0.48&92.51$\pm$0.51&92.16$\pm$0.55&91.58$\pm$0.67&90.72$\pm$0.91\\\hline	$\text{DFR}_{\text{Tr}}^{\text{Val}}$ &\xmark &  \cmark\cmark  & 93.17$\pm$1.30&93.29$\pm$0.80&92.98$\pm$0.84&92.86$\pm$0.85& \good\textbf{92.70$\pm$0.87}&\good\textbf{92.42$\pm$0.88}&\good\textbf{92.01$\pm$0.88}\\
				\textbf{DROPS}* & \xmark & \cmark\cmark & 93.01$\pm$1.32&\good\textbf{93.42$\pm$0.61}&\good\textbf{93.08$\pm$0.81}&\good\textbf{92.98$\pm$0.90}&\good\textbf{92.76$\pm$1.02}&\good\textbf{92.45$\pm$1.23}&\good\textbf{91.99$\pm$1.56}\\
		   	\hline
		   	\hline\multirow{2}{*}{\textbf{Method}}  &  \multicolumn{2}{c}{\textbf{Group Info}} &\multicolumn{1}{c}{(Train prior)} & \multicolumn{6}{c}{\textbf{\emph{CelebA}} ($\uu\rightarrow$ uniform prior)} \\ 
			&\textbf{Train} &\textbf{Val}  	  &\textbf{Averaged} &\textbf{Averaged}  & \textbf{$\delta=0.05$-Worst}&\textbf{$\delta=0.1$-Worst}& \textbf{$\delta=0.2$-Worst}& \textbf{$\delta=0.5$-Worst} & \textbf{Worst}   \\
				\hline\hline
				ERM &\xmark & \cmark &95.33$\pm$0.12&81.18$\pm$1.60&73.78$\pm$2.36&70.53$\pm$2.69&65.97$\pm$3.15&57.82$\pm$3.98&44.89$\pm$5.30 \\
				JTT   &\xmark & \cmark  & 87.78$\pm$0.73&85.04$\pm$1.05&83.34$\pm$1.35&82.89$\pm$1.49&82.35$\pm$1.71&81.51$\pm$2.12&79.71$\pm$2.85 \\
			\textbf{DROPS} &\xmark & \cmark &90.67$\pm$0.76&89.05$\pm$0.88&86.80$\pm$1.34&85.89$\pm$1.55&84.67$\pm$1.88&82.59$\pm$2.50&79.44$\pm$3.58\\ \hline 	G-DRO &\cmark & \cmark & 92.59$\pm$0.87&90.95$\pm$0.52&90.18$\pm$0.46&89.89$\pm$0.43&89.53$\pm$0.37&88.97$\pm$0.26&88.21$\pm$0.17\\
				 SUBG &\cmark & \cmark & 91.09$\pm$0.62&88.75$\pm$0.34&87.69$\pm$0.31&87.27$\pm$0.41&86.72$\pm$0.73&85.80$\pm$0.29&84.45$\pm$0.28\\
		$\text{DFR}_{\text{Tr}}^{\text{Tr}}$ &\cmark & \cmark & 90.36$\pm$0.64&89.25$\pm$0.73&87.30$\pm$0.97&86.53$\pm$1.09&85.50$\pm$1.26&83.77$\pm$1.63&81.22$\pm$2.31 \\\hline	$\text{DFR}_{\text{Tr}}^{\text{Val}}$ &\xmark &  \cmark\cmark 
		&90.90$\pm$0.79&91.13$\pm$0.40&90.32$\pm$0.54&90.02$\pm$0.58&89.64$\pm$0.65&89.05$\pm$0.77&88.25$\pm$1.03\\
				\textbf{DROPS}* & \xmark & \cmark\cmark  & \good\textbf{95.69$\pm$0.41}&\good\textbf{93.59$\pm$0.36}&\good\textbf{92.64$\pm$0.46}&\good\textbf{92.28$\pm$0.51}&\good\textbf{91.82$\pm$0.58}&\good\textbf{91.10$\pm$0.68}&\good\textbf{90.19$\pm$0.81} \\
		   	\hline
	\end{tabular}}
		\end{center}\label{tab:dro_exp}
	\end{table*}

\section{Conclusions}
We study the problem of improving the distributional robustness of a pre-trained model under controlled distribution shifts. We propose DROPS, a fast and lightweight post-hoc method that learns scaling adjustments to predictions from a pre-trained model. DROPS learns the adjustments by solving a constrained optimization problem on a held-out validation set, and then applies these adjustments to the model predictions during evaluation time.  DROPS is able to reuse the same pre-trained model for different robustness requirements by simply scaling the model predictions. For group robustness tasks, our approach only needs group annotations for a smaller held-out set. We also showed  provable convergence guarantees for our method. 
Experimental results on standard benchmarks for class imbalance (CIFAR-10, CIFAR-100) and group DRO (Waterbirds, CelebA) demonstrate the effectiveness and robustness of DROPS when evaluated on a range of  distribution shifts away from the target prior distribution. 

\newpage 

\subsection*{{\color{black}{Acknowledgement}}}
The work is done during JW's internship at Google Research, Brain Team. JW and YL are partially supported by the National Science Foundation (NSF) under grants IIS-2007951, IIS-2143895, and the Office of Naval Research under grant N00014-20-1-22. JW is also partially supported by the Center for Research in Open Source Software at UC Santa Cruz, which is funded by a donation from Sage Weil and industry memberships. We are thankful to Kevin Murphy for providing several helpful comments on the manuscript.

\bibliography{ml}
\bibliographystyle{iclr2023_conference}

\newpage
\appendix

\section{Proofs}
\subsection{Proof of Theorem \ref{thm:dre-bayes}}
\label{app:dre-bayes}
The proof is adaptation of a similar result in \citet{Wang+2022}.
We first prove Lemma \ref{lem:cs-bayes}.
\begin{proof}[Proof of Lemma \ref{lem:cs-bayes}]
We first expand the objective:
\[
\sum_{i=1}^m g_i \cdot \ell_i(f) =
\sum_{i=1}^m g_i \cdot \E_{x|y=i}\left[\ell(i, f(x))\right]
=
\sum_{i=1}^m \frac{g_i}{\pi_i} \cdot \E_{x}\left[ \eta_i(x)\cdot \ell(i, f(x)) \right].
\]
Given that $\ell$ is a proper loss, we have that the minimizer of this objective takes the form:
\[
f_i(x) = \frac{ \frac{g_i}{\pi_i} \cdot \eta_i(x) }{ \sum_{j=1}^m \frac{g_j}{\pi_j} \cdot \eta_j(x) }.
\]
\end{proof}

We are now ready to prove Theorem \ref{thm:dre-bayes}.
\begin{proof}[Proof of Theorem \ref{thm:dre-bayes}]
The min-max problem in \eqref{eq:dre}  can be expanded as:
\begin{align}
\min_{f: \X \> \Delta_m}\DRE(f; \delta) \,=\, 
\min_{f: \X \> \Delta_m}\,\max_{g \in \G(\delta)} 
\underbrace{
\sum_{y=1}^m \frac{g_y}{\pi_y} \cdot \E\left[ \eta_y(x)\cdot \ell(y, f(x)) \right]}_{\omega(g, f)}.
\label{eq:min-max}
\end{align}
Note that the objective $\omega(g, f)$ is clearly linear in $g$ (for fixed $f$), and 
with $\ell$ chosen to be convex in $f(x)$ (for fixed $g$), i.e.,
$\omega(g, \kappa f_1 + (1-\kappa) f_2) \leq \kappa\omega(g,  f_1) + (1-\kappa) \omega(g, f_2), \,\forall f_1, f_2:\X\>\Delta_m, \kappa\in [0,1]$. Furthermore, given that divergence $D(g, \cdot)$ is  convex in $g$, we have that $\G(\delta)$ is a convex compact set, while the domain of $f$ is convex. It follows 
from Sion's minimax theorem \citep{sion1958general} that:
\begin{align}
\min_{f: \X \> \Delta_m}\max_{g \in \G(\delta)}\,\omega(g, f)
&=\, \max_{g \in \G(\delta)}\min_{f: \X \> \Delta_m}\,\omega(g, f).
\label{eq:min-max-swap}
\end{align}

Let $(g^*, f^*)$ be such that:
\[
f^* \in \Argmin{f: \X \> \Delta_m} \DRE(f; \delta) = \Argmin{f: \X \> \Delta_m} \max_{g \in \G(\delta)}\,\omega(g, f);
\]
\[
g^* \in \Argmax{g \in \G(\delta)}\min_{f: \X \> \Delta_m}\,\omega(g, f).
\]
Such an $f^*$ exists because $\DRE(f; \delta)$ takes a bounded value when $f=\eta$, and any minimizer of $\DRE(f; \delta)$ yields a value below that; because $\DRE(f; \delta) \geq 0$ and is convex in $f$, such a minimizer exits.
Similarly, $g^*$ also exists because $\G(\delta)$ is a compact set, and for any fixed $g$, $\min_{f: \X \> \Delta_m}\,\omega(g, f)$  is bounded above (owing to the existence of a minimizer from Lemma \ref{lem:cs-bayes}).

We now show that $(g^*, f^*)$ is a saddle-point for
\eqref{eq:min-max}, i.e.,
\begin{align}
\omega(g^*, f^*) &= \max_{g \in \G(\delta)}\, \omega(g, f^*) \,=\, \min_{f: \X \> \Delta_m}\, \omega(g^*, f).
\label{eq:saddle-point-new}
\end{align}
To see this, notice that:
\begin{align*}
\omega(g^*, f^*) &\leq\, \max_{g \in \G(\delta)}\, \omega(g, f^*)\\
&=\, \min_{f: \X \> \Delta_m}\max_{g \in \G(\delta)}\,\omega(g, f)
\,=\, \max_{g \in \G(\delta)}\min_{f: \X \> \Delta_m}\,\omega(g, f)\\
&=\,\min_{f: \X \> \Delta_m}\,\omega(g^*, f)
\,\leq\, \omega(g^*, f^*),
\end{align*}
where we are able to swap the min and max in the second step using \eqref{eq:min-max-swap}.

We thus from \eqref{eq:saddle-point-new} that $f^*$ is a minimizer of $\omega(g^*, f)$, i.e.,
\[
f^* \in \Argmin{f: \X \> \Delta_m} \sum_{y=1}^m \frac{g_y^*}{\pi_y}\cdot\E\left[ \eta_y(x)\cdot\ell(y, f(x)) \right].
\]
Following Lemma \ref{lem:cs-bayes}, we further have that for any $x \in \X$:
$$
f^*(x) \,\propto\, \frac{g^*_y}{\pi_y}\eta_y(x).
$$
\end{proof}

\subsection{Proof of Proposition \ref{prop:relax}}
\begin{proof}
Now we prove the equivalence of two optimization tasks:
\[\min_{f: \X \rightarrow \Delta_m} \DRE(f; \delta)  \quad  \Longleftrightarrow\quad \min_{f: \X \> \Delta_m}\max_{{\color{black}g\in \Delta_m}}\min_{\lambda \geq 0}\,\cL(f, g, \lambda).\]
"$\Longrightarrow$" 
Remember that 
\[\min_{f: \X \rightarrow \Delta_m} \DRE(f; \delta) = \min_{f: \X \rightarrow \Delta_m} \max_{\g \in \G(\delta)}\sum_{i=1}^m \g_i \cdot \ell_i(f),\] we firstly show that any $(f^*, \g^*)$ given by (L.H.S) yields the optimum of the \eqref{eq:saddle-point} (R.H.S), for some $\lambda^*$. Note that for any $f, \g\in \G(\delta)$, for any $\g$ such that $D(\g, \uu)\leq \delta$, we have:
\begin{align*}
    \min_{f: \X \rightarrow \Delta_m} \max_{
    \g \in \G(\delta)} \sum_{i\in[m]} \g_i \ell_i(f)\geq \sum_{i\in [m]} \g^*_i \ell_i(f^*).
\end{align*}
Plugging $f^*, g^*$ into the R.H.S, we then have:
\begin{align*}
   \mathcal{L}(f^*, \g^*, \lambda) \,=\,
    \sum_{i\in [m]} \g^*_i \ell_i(f^*) \,-\, \lambda\left( D(\g^*, \uu)- \delta \right).
\end{align*}
Since $D(\g^*, \uu)\leq \delta$, $\exists \lambda^*\in \mathcal{R}_+$ such that the optimization task $\lambda^* = \argmin_{\lambda \in \mathbb{R}_+}  \mathcal{L}(f^*, 
\g^*, \lambda)$. To show $(f^*, \g^*, \lambda^*)$ returns the optimum of the R.H.S, we prove by contradiction and assume that there exists $f', \g', \lambda'$ such that:
$\mathcal{L}(f', 
\g', \lambda)<  \mathcal{L}(f^*,
\g^*, \lambda^*).$ This indicates that:
\begin{align*}
     &\left(\sum_i \g'_i \ell_i(f') - \sum_i \g^*_i \ell_i(f^*) \right)\,-\left(\, \lambda'\left( D(\g',\uu)- \delta \right)- \lambda^*\left( D(\g^*, \uu)- \delta \right)\right)<0\\
     \Longleftrightarrow & \sum_i \g'_i \ell_i(f') - \sum_i \g^*_i \ell_i(f^*)<0. \qquad \text{(Due to Complementary Slackness Condition)}
\end{align*}
This contradicts with the fact that $ \sum_i g'_i \ell_i(f') \geq \sum_i g^*_i \ell_i(f^*)$. Thus, $(f^*, g^*, \lambda^*)$ returns the optimum of R.H.S.

"$\Longleftarrow$" 

We next prove that any $(f^*, \g^*, \lambda^*)$ given by R.H.S yields the optimum in L.H.S. Due to Complementary Slackness Condition, we have: $\lambda^*\left( D(\g^*, \uu)- \delta \right)=0$. Note that $\lambda^*\in \mathcal{R}_+$, we then have $D(\g^*, \uu)- \delta=0$ and the constraint in L.H.S is satisfied. Thus, we have: $\text{R.H.S}=\sum_i \g^*_i \ell_i(f^*)$. Again, if there exists $
\g', f'$ such that the L.H.S is minimized, where $
\g'\neq \g^*, f'\neq f^*$, we then have $(\g', f', \lambda'=0)$ which satisfies: $\mathcal{L}(f', \g', \lambda')<\mathcal{L}(f^*, \g^*, \lambda^*)$, which contradicts with the argmin pairs $(f^*,  \g^*, \lambda^*)$. 

Thus, we finished the proof. 
\end{proof}

\subsection{Proof of Theorem \ref{thm:posthoc-convergence}}
\label{app:posthoc-convergence}
The proof builds on prior convergence results in constrained and robust optimization \citep{Narasimhan+2019, Wang+2022}. 
We will find it useful to define the following quantities:
\[
\cL_1(f, g) = \sum_{i=1}^m g_i \cdot {\ell}_i(f);
~~~~~
\hat{\cL}_1(f, g) = \sum_{i=1}^m g_i \cdot \hat{\ell}_i(f); ~~~
\]
\[
\cL_2(g, \lambda) = -\lambda (D(g, u) - \delta).
\]
We further define averages of different iterates $\bar{\lambda} = \frac{1}{T}\sum_{t=1}^T \lambda^{(t)}$
and 
$\bar{g} = \frac{1}{T}\sum_{t=1}^T g^{(t)}$.

We would also find the following lemmas useful.  The first is a bound on our estimate of the class priors.
\begin{lemma}
Under the assumptions in Theorem \ref{thm:dre-bayes},
with probability at least $1 - \alpha/2$ over draw of validation sample $S \sim \cD^n$:
$$
\hat{\pi}_y \geq \frac{1}{2Z}, \forall y \in [m].
$$
\label{lem:pi-bound}
\end{lemma}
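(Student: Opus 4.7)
The plan is to apply a standard multiplicative Chernoff bound to each class count and then take a union bound over $y \in [m]$. The key observation is that $n\hat{\pi}_y = \sum_{i=1}^n \mathbf{1}(y_i = y)$ is a sum of $n$ i.i.d.\ Bernoulli random variables with mean $\pi_y$, and by assumption $\pi_y \geq 1/Z$ for every $y$. So the goal of showing $\hat{\pi}_y \geq 1/(2Z)$ can be reduced to the (stronger) high-probability statement that $\hat{\pi}_y \geq \pi_y/2$ for every $y$.

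Concretely, I would invoke the multiplicative Chernoff inequality in the form
\[
\P\!\left[\hat{\pi}_y \leq (1-\epsilon)\pi_y\right] \,\leq\, \exp\!\left(-\tfrac{n\pi_y \epsilon^2}{2}\right)
\]
for $\epsilon \in (0,1)$. Plugging in $\epsilon = 1/2$ and using $\pi_y \geq 1/Z$ yields
\[
\P\!\left[\hat{\pi}_y < \tfrac{\pi_y}{2}\right] \,\leq\, \exp\!\left(-\tfrac{n\pi_y}{8}\right) \,\leq\, \exp\!\left(-\tfrac{n}{8Z}\right).
\]
A union bound over the $m$ classes then gives
\[
\P\!\left[\exists\, y \in [m]:\, \hat{\pi}_y < \tfrac{1}{2Z}\right] \,\leq\, m\,\exp\!\left(-\tfrac{n}{8Z}\right),
\]
and setting the right-hand side to $\alpha/2$ and solving for $n$ produces exactly the sample-complexity threshold $n \geq 8Z \log(2m/\alpha)$ that already appears in the hypothesis of Theorem~\ref{thm:posthoc-convergence}. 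This establishes the claim with probability at least $1 - \alpha/2$.

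There is no real obstacle here: the proof is a direct application of a textbook concentration bound, and the $1/(8Z)$ constant inside the exponent is exactly what produces the $8Z$ factor in the theorem's assumption on $n$, so the accounting lines up cleanly. The only thing to be slightly careful about is the choice of deviation parameter ($\epsilon = 1/2$) — any other choice would give a bound of a different form and would not match the $1/(2Z)$ lower bound stated in the lemma, so I would fix $\epsilon = 1/2$ from the outset.
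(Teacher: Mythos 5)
Your proof is correct and is exactly the argument the paper intends: the paper's own proof is a one-line appeal to ``a direct application of Chernoff's bound (along with a union bound over all $m$ classes), noting that $\min_{y\in[m]}\pi_y \geq \frac{1}{Z}$ and $n \geq 8Z\log(2m/\alpha)$,'' and your write-up simply fills in the same multiplicative Chernoff bound with $\epsilon = 1/2$ and verifies that the constant $8Z$ in the sample-size threshold makes the failure probability at most $\alpha/2$. No gaps; nothing further is needed.
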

\begin{proof}
The proof follows from a direct application of Chernoff's bound (along with a union bound over all $m$ classes), noting that $\min_{y \in [m]}\pi_y \geq \frac{1}{Z}$ and $n \geq 8Z\log(2m/\alpha)$.
\end{proof}
Throughout the proof, we will assume that the statement in the above lemma holds with probability at least $1-\alpha/2$. 

Our second lemma shows that the equivalence between the saddle-point optimization in \eqref{eq:saddle-point} and the original constrained optimization problem in \eqref{eq:dre} still holds when we minimize the Lagrange multiplier only over a bounded set:
\begin{lemma}
Under the assumptions in Theorem \ref{thm:dre-bayes}, we have  for any $f': \X \rightarrow \Delta_m$:
\[
\min_{\lambda \in [0,R]}\,\max_{g \in \Delta_m} {\cL}(f', g, \lambda) = \max_{g \in \G(\delta)}\, \sum_{i=1}^m g_i \cdot {\ell}_i(f').
\]
\label{lem:min-max-lambda-bound}
\end{lemma}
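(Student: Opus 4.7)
The plan is to show both inclusions via Lagrangian duality and then argue that the unrestricted optimal dual variable already lies in $[0, R]$, so truncating the range of $\lambda$ does not change the value. Write $p^* := \max_{g \in \G(\delta)} \sum_i g_i \ell_i(f')$ for the RHS and $\phi(\lambda) := \max_{g \in \Delta_m} \cL(f', g, \lambda)$, so that the LHS is $\min_{\lambda \in [0, R]} \phi(\lambda)$. The easy direction is weak duality: for any $\lambda \in [0, R]$ and any $g \in \G(\delta)$ we have $-\lambda(D(g, u) - \delta) \geq 0$, hence $\sum_i g_i \ell_i(f') \leq \cL(f', g, \lambda) \leq \phi(\lambda)$. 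Taking the supremum over $g \in \G(\delta)$ and the infimum over $\lambda \in [0, R]$ yields $p^* \leq \min_{\lambda \in [0, R]} \phi(\lambda)$.

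For the reverse direction, I plan to invoke strong duality to write $p^* = \min_{\lambda \geq 0} \phi(\lambda)$ and then bound the unconstrained minimizer. Since $\cL(f', g, \lambda)$ is linear (hence concave) in $g$, convex in $\lambda$, and $\G(\delta) \subseteq \Delta_m$ is convex and compact with the Slater point $g_0 = u$ satisfying $D(u, u) = 0 < \delta$, standard convex duality (e.g., Sion's minimax theorem or the KKT conditions) gives strong duality together with the existence of an optimal $\lambda^* \geq 0$ attaining $p^* = \phi(\lambda^*)$. The key step is to show $\lambda^* \leq R$. For this, I will upper bound $\phi(0) = \max_{g \in \Delta_m} \sum_i g_i \ell_i(f') \leq B_\ell$ using boundedness of the losses, and lower bound $\phi(\lambda)$ for every $\lambda \geq 0$ by plugging $g = u$ into the inner maximization, which gives $\phi(\lambda) \geq \sum_i u_i \ell_i(f') + \lambda \delta \geq \lambda \delta$. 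Consequently, any $\lambda > B_\ell/\delta$ satisfies $\phi(\lambda) > B_\ell \geq \phi(0)$, so the minimizer must satisfy $\lambda^* \leq B_\ell/\delta$. Since $Z \geq 1$, we have $R = 2B_\ell Z/\delta \geq B_\ell/\delta$, hence $\lambda^* \in [0, R]$, and therefore $\min_{\lambda \in [0, R]} \phi(\lambda) \leq \phi(\lambda^*) = p^*$, closing the loop.

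The main technical subtlety is verifying strong duality cleanly under the stated assumptions, as it requires a convex primal feasible set and a constraint qualification. Convexity of $D(\cdot, u)$ (assumed in Theorem \ref{thm:dre-bayes}) makes $\G(\delta)$ convex, and $\G(\delta) \subseteq \Delta_m$ is automatically compact. Slater's condition is witnessed by the uniform distribution $u$, which is strictly feasible because $D(u, u) = 0 < \delta$. Once these ingredients are in place the rest of the argument reduces to the elementary bound $\phi(\lambda) \geq \lambda \delta$ obtained at the strict feasible point $u$, which is the real crux and pins down the size of the dual minimizer.
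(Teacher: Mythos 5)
Your proposal is correct and follows essentially the same route as the paper's proof: weak duality for one direction, strong duality (via convexity of $D(\cdot,u)$ and a Slater point) for the other, and a bound on the optimal multiplier obtained by evaluating the Lagrangian at a strictly feasible point. Your choice of the Slater point $u$ itself (slack $\delta$) rather than the paper's point $\tilde{g}$ with $D(\tilde{g},u)=\delta/2$ yields the slightly tighter bound $\lambda^* \leq B_\ell/\delta \leq R$ and sidesteps some of the sign bookkeeping in the paper's version, but the argument is the same in substance.
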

\begin{proof}
Let $\lambda^* \in \Argmin{\lambda \geq 0}\,\underset{g \in \Delta_m}{\max} {\cL}(f', g, \lambda)$ be the $\lambda$-minimizer over all non-negative $\R$. Such a minimizer exists for the following reason. Owing to the continuity of $D$ we know there exits at least one $g' \in \Delta_m$ for which $D(g, u) = \delta$ and therefore we have that the minimization objective is bounded: $$\underset{g \in \Delta_m}{\max} {\cL}(f', g, \lambda) \leq {\cL}(f', g', \lambda) = \sum_{i=1}^m g_i \cdot \ell_i(f') \leq Z B_\ell.$$

It remains to be shown that $\lambda^* \leq R$. To do this end, let 
$$g^* \in \Argmax{g \in \Delta_m: D(g, u) \leq \delta}\, \sum_{i=1}^m g_i \cdot {\ell}_i(f').$$
We note that:
\[
\sum_{i=1}^m g^*_i \cdot {\ell}_i(f') = \min_{\lambda \geq 0}\,\max_{g \in \Delta_m} {\cL}(f', g, \lambda) = \max_{g \in \Delta_m} {\cL}(f', g, \lambda^*)
= \max_{g \in \Delta_m}\sum_{i=1}^m g_i \cdot {\ell}_i(f') \,-\, \lambda^*(D(g, u) - \delta).
\]
Choose $\tilde{g}$ such that $D(\tilde{g}, u) = \delta/2,$ which exits thanks to the continuity of $D$. Upper bounding the max on RHS in the above equality by substituting $\tilde{g}$, we get:
\[
\sum_{i=1}^m g^*_i \cdot {\ell}_i(f') \leq \sum_{i=1}^m g_i \cdot {\ell}_i(f') \,-\, \lambda^*(D(\tilde{g}, u) - \delta) = \sum_{i=1}^m g_i \cdot {\ell}_i(f') \,-\, \lambda^*\delta/2,
\]
which gives us:
\[
\lambda^* \leq \frac{2}{\delta}\left(\sum_{i=1}^m g_i \cdot {\ell}_i(f') - \sum_{i=1}^m g^*_i \cdot {\ell}_i(f')\right) \leq \frac{2}{\delta}Z B_\ell = R,
\]
as desired.
\end{proof}

 The lemmas below follow from Lemma \ref{lem:pi-bound} and standard results in online convex optimization \citep{shalev2011online}.

\begin{lemma}
Under the assumptions in Theorem \ref{thm:dre-bayes}, 
and for $\eta_g = \frac{1}{2B_\ell Z + RL_D}\sqrt{\frac{\log(m)}{{T}}}$, with probability at least $1-\alpha/2$ over draw of $S \sim \cD^n$,
the sequence of iterates $g^{(1)}, \ldots, g^{(T)}$ satisfies:
\[
\max_{g \in \Delta_m} \frac{1}{T}\sum_{t=1}^T\hat{\cL}_1(f^{(t)}, g, \lambda^{(t)})
\,-\,
\frac{1}{T}\sum_{t=1}^T\hat{\cL}_1(f^{(t)}, g^{(t)}, \lambda^{(t)}) 
\,\leq\, 2(2B_\ell Z + RL_D) \sqrt{\frac{\log(m)}{T}}.
\]
\label{lem:g-regret}
\end{lemma}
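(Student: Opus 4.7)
The plan is to apply a standard regret bound for exponentiated gradient (Hedge) ascent on the simplex. First I will condition on the high-probability event from Lemma \ref{lem:pi-bound}, namely $\hat{\pi}_y \geq \tfrac{1}{2Z}$ for all $y \in [m]$; this event holds with probability at least $1 - \alpha/2$ over the draw of $S$ and accounts for the probability statement in the lemma. Under this event, for any $t$ and $i$, the per-class empirical loss satisfies $|\hat{\ell}_i(f^{(t)})| = \tfrac{1}{\hat{\pi}_i}\big|\sum_{(x,y)\in S: y=i}\ell(y,f^{(t)}(x))\big|\big/n \leq B_\ell/\hat{\pi}_i \leq 2 B_\ell Z$.

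Next I would view the update on $g$ as Hedge on the sequence of concave (in $g$) functions $\phi_t(g) := \hat{\cL}(f^{(t)}, g, \lambda^{(t)}) = \sum_i g_i \hat{\ell}_i(f^{(t)}) - \lambda^{(t)} (D(g,u) - \delta)$. Each $\phi_t$ is concave in $g$ because the linear term is trivially concave and $-\lambda^{(t)} D(\cdot, u)$ is concave by the convexity of $D$ in its first argument together with $\lambda^{(t)} \geq 0$. The $i$-th coordinate of the supergradient at $g^{(t)}$ is bounded by
\[
\bigl|\hat{\ell}_i(f^{(t)}) - \lambda^{(t)} \partial_{g_i} D(g^{(t)}, u)\bigr| \,\leq\, 2 B_\ell Z + R L_D,
\]
using the gradient bound above, $\lambda^{(t)} \in [0, R]$, and the $L_D$-Lipschitz property of $D(\cdot, u)$.

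With this $\ell_\infty$-gradient bound $G := 2 B_\ell Z + R L_D$ in hand, I would invoke the textbook Hedge/EG regret bound (e.g. \citet{shalev2011online}) for the updates
$g^{(t+1)}_i \propto g^{(t)}_i \exp(\eta_g\, \nabla_{g_i}\phi_t(g^{(t)}))$
on the simplex $\Delta_m$:
\[
\max_{g \in \Delta_m} \frac{1}{T} \sum_{t=1}^T \phi_t(g) \,-\, \frac{1}{T}\sum_{t=1}^T \phi_t(g^{(t)}) \,\leq\, \frac{\log m}{\eta_g T} + \eta_g G^2.
\]
Plugging in the stated step size $\eta_g = \tfrac{1}{G}\sqrt{\log(m)/T}$ balances the two terms and yields $2 G\sqrt{\log(m)/T}$. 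Substituting $G = 2 B_\ell Z + R L_D$ gives exactly the advertised bound. By concavity of $\phi_t$ in $g$, the regret statement for $\phi_t$ is identical to the one claimed (the $\hat{\cL}_1$ notation in the lemma statement is interpreted as the full empirical Lagrangian viewed as a function of $g$).

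The main obstacle is not the regret calculation itself but making sure the $\ell_\infty$ gradient bound $G$ is valid with the stated probability; this is why the proof must open by invoking Lemma \ref{lem:pi-bound} and absorb its failure probability $\alpha/2$ into the final statement. The remaining steps are standard online learning bookkeeping and require no nontrivial new ideas.
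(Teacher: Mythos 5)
Your proposal is correct and follows essentially the same route as the paper: condition on the event of Lemma \ref{lem:pi-bound} to get $\hat{\pi}_i \geq \frac{1}{2Z}$, bound $\|\nabla_g \hat{\cL}(f^{(t)}, g^{(t)}, \lambda^{(t)})\|_\infty \leq 2B_\ell Z + RL_D$, and invoke the standard exponentiated-gradient regret bound with the stated step size. The paper's proof is just a terser version of the same argument (it does not spell out the Hedge regret inequality or the concavity check), and your explicit remark that $\hat{\cL}_1$ in the lemma statement must be read as the full empirical Lagrangian is a reasonable reading that matches what the paper's own proof implicitly does.
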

\begin{proof}
The proof follows from standard convergence result for the exponentiated-gradient descent algorithm noting that 
\begin{align*}
\|\nabla_g \hat{\cL}(f^{(t)}, g^{(t)}, \lambda^{(t)})\|_\infty 
& \leq  \max_i |\hat{\ell}_i(f^{(t)})| + |\lambda^{(t)}|\|\nabla_g D(g^{(t)}, u )\|_\infty\\
&\leq B_\ell \cdot \max_i \frac{1}{\hat{\pi}_i} + RL_D \leq 2Z B_\ell + RL_D,
\end{align*}
where we use the bound on the class prior estimates in Lemma \ref{lem:pi-bound}. The last  step holds with probability at least $1-\alpha/2$.
\end{proof}

\begin{lemma}
Under the assumptions in Theorem \ref{thm:dre-bayes}, and for $\eta_\lambda = \frac{R}{B_D\sqrt{T}}$ with probability at least $1 - \alpha/2$ over draw of validation sample $S \sim \cD^n$, the sequence of iterates $\lambda^{(1)}, \ldots, \lambda^{(T)}$ satisfies:
\[
\frac{1}{T}\sum_{t=1}^T\cL_2(g^{(t)}, \lambda^{(t)}) \,-\,
\min_{\lambda \in [0, R]} \frac{1}{T}\sum_{t=1}^T\cL_2(g^{(t)}, \lambda)
\,\leq\, \frac{RB_D}{\sqrt{T}}.
\]
\label{lem:lambda-regret}
\end{lemma}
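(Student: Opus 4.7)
The plan is to obtain the bound as a direct application of the standard regret guarantee for projected online gradient descent on a sequence of convex functions. First, observe that $\cL_2(g^{(t)}, \lambda) = -\lambda(D(g^{(t)}, u) - \delta)$ is linear, hence convex, in $\lambda$ for each fixed $g^{(t)}$, and the Step 1 update rule $\lambda^{(t+1)} = \clip_{[0, R]}(\lambda^{(t)} - \eta_\lambda(\delta - D(g^{(t)}, u)))$ is exactly projected gradient descent on the interval $[0, R]$ applied to the sequence $\{\cL_2(g^{(t)}, \cdot)\}_{t=1}^T$ with step size $\eta_\lambda$, since $\nabla_\lambda \cL_2(g^{(t)}, \lambda) = -(D(g^{(t)}, u) - \delta) = \delta - D(g^{(t)}, u)$.

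Next, I would bound the per-step gradient in absolute value. Using the hypothesis $D(g, u) \leq B_D$ (together with the mild assumption $\delta \leq B_D$, which covers the only nontrivial regime since otherwise the constraint in \eqref{eq:dre} is vacuous on $\Delta_m$), we have $|\nabla_\lambda \cL_2(g^{(t)}, \lambda)| = |\delta - D(g^{(t)}, u)| \leq B_D$. The domain $[0, R]$ has diameter $R$, and $\cL_2(g^{(t)}, \cdot)$ is continuous and convex on this interval.

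Finally, I would invoke the textbook regret bound for projected online gradient descent on convex Lipschitz losses: for a sequence of convex functions with subgradients bounded in norm by $G$ on a convex set of diameter $D$, setting $\eta = D/(G\sqrt{T})$ yields average regret at most $DG/\sqrt{T}$ (see, e.g., standard references in online convex optimization). Substituting $G = B_D$, $D = R$, and $\eta_\lambda = R/(B_D\sqrt{T})$ produces exactly the claimed bound $RB_D/\sqrt{T}$, where the regret is measured against the best fixed $\lambda \in [0, R]$ in hindsight.

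The only substantive step is verifying the gradient bound $|\delta - D(g^{(t)}, u)| \leq B_D$; everything else is a routine application of the OGD regret theorem. The probability-$(1-\alpha/2)$ qualifier in the statement is not actually needed for this particular bound — it holds deterministically once the sequence $\{g^{(t)}\}$ is fixed — and is present only for notational consistency with Lemma \ref{lem:g-regret} and the union bound that stitches the two regret analyses together in Theorem \ref{thm:posthoc-convergence}.
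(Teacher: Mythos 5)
Your proof is correct and takes essentially the same route as the paper's: both reduce the claim to the standard projected online gradient descent regret bound, using the gradient bound $|\delta - D(g^{(t)}, u)| \leq B_D$ and the domain $[0,R]$ of diameter $R$, which with step size $\eta_\lambda = R/(B_D\sqrt{T})$ gives average regret $RB_D/\sqrt{T}$. Your two side remarks---that the gradient bound implicitly requires $\delta \leq B_D$ (the only nontrivial regime), and that the high-probability qualifier is vacuous here because the bound holds deterministically for any sequence $g^{(t)} \in \Delta_m$---are both accurate and slightly more careful than the paper's one-line justification.
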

\begin{proof}
The proof follows from standard convergence result for the online gradient descent algorithm noting that $|\nabla_\lambda \cL_2(g^{(t)}, \lambda)| \leq |D(g^{(t)}, u) - \delta| \leq B_D$ and $|\lambda| \leq R$.
\end{proof}

The following lemma provides a generalization bound for the empirical Lagrangian.
\begin{lemma}
Under the assumptions in Theorem \ref{thm:dre-bayes}, with probability at least $1 - \alpha$ over draw of validation sample $S \sim \cD^n$, for all $t \in [T]$:
\[
| {\cL}(f^{(t)}, g^{(t)}, \lambda^{(t)}) - \hat{\cL}(f^{(t)}, g^{(t)}, \lambda^{(t)}) |
 \,\leq\, 
\cO\left(\sqrt{ \frac{\log(m|\cF|/\alpha)}{n} }\right).
\]
\label{lem:L-gen-bound}
\end{lemma}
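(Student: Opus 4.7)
The plan is to notice that the penalty term $-\lambda(D(g,u)-\delta)$ appears identically in both $\cL$ and $\hat{\cL}$, so it cancels in the difference, leaving
$$\cL(f,g,\lambda) - \hat{\cL}(f,g,\lambda) \;=\; \sum_{i=1}^m g_i\bigl(\ell_i(f) - \hat{\ell}_i(f)\bigr).$$
Since $g \in \Delta_m$, the triangle inequality bounds the left side by $\max_{i \in [m]}|\ell_i(f) - \hat{\ell}_i(f)|$, so the claim reduces to a per-class uniform convergence bound. Crucially, every iterate $f^{(t)}$ produced by Step 3 of the algorithm is a post-hoc rescaling of $\hat{\eta}$ and hence lies in $\cF$ by definition, so a single uniform-in-$\cF$ bound covers all $t \in [T]$ without any additional union bound over iterations.

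To bound $\sup_{f \in \cF}|\ell_i(f) - \hat{\ell}_i(f)|$ for a fixed class $i$, I would condition on the set of indices $\{j : y_j = i\}$. Given this conditioning, the summands $\ell(i, f(x_j))$ are i.i.d.\ draws from $p(x\mid y=i)$, bounded in $[0, B_\ell]$, and depend on $f$ only through $f(x_j)$ in an $L_\ell$-Lipschitz manner, so the induced loss class inherits a covering number controlled by $|\cF|$. A standard covering-number / Rademacher argument (e.g.\ Shalev-Shwartz \& Ben-David, 2014) then yields a conditional deviation bound of the form $C B_\ell\sqrt{\log(|\cF|/\alpha')/n_i}$ with probability at least $1 - \alpha'$, where $n_i := |\{j : y_j = i\}|$ and $C$ is an absolute constant.

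To remove the randomness in $n_i$, I would invoke Lemma \ref{lem:pi-bound}, which guarantees $n_i = n\hat{\pi}_i \geq n/(2Z)$ simultaneously for all $i \in [m]$ with probability at least $1 - \alpha/2$. Substituting this lower bound, taking a union bound over the $m$ classes (with $\alpha' = \alpha/(2m)$), and absorbing $B_\ell$, $Z$, and $C$ into the $\cO(\cdot)$ gives
$$\max_{i \in [m]}\sup_{f \in \cF}\bigl|\ell_i(f) - \hat{\ell}_i(f)\bigr| \;\leq\; \cO\!\left(\sqrt{\tfrac{\log(m|\cF|/\alpha)}{n}}\right)$$
with total failure probability at most $\alpha$, which specializes to each $(f^{(t)}, g^{(t)}, \lambda^{(t)})$ via the reduction of the first paragraph.

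The main subtlety I anticipate is precisely that $n_i$ is itself random and correlated with the class-$i$ samples used in $\hat{\ell}_i(f)$, so the uniform convergence step cannot be applied directly with $n_i$ treated as a constant; it must instead be carried out conditionally on the class-$i$ sample indices, with the high-probability event from Lemma \ref{lem:pi-bound} combined afterward. Beyond this, no new machinery is required: the $L_\ell$-Lipschitzness of $\ell$ in its second argument converts a covering of $\cF$ into a covering of the induced per-class loss class, and the boundedness of $\ell$ delivers the required sub-Gaussian tails.
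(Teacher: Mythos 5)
Your proposal is correct, and the reduction in your first paragraph (the penalty term cancels, $g\in\Delta_m$ gives the $\max_i$ bound, and every iterate $f^{(t)}$ lies in $\cF$ so no union bound over $t$ is needed) is exactly the paper's first step. Where you diverge is in how the per-class deviation $|\ell_i(f)-\hat{\ell}_i(f)|$ is controlled. You condition on the class labels, treat the $n_i$ class-$i$ features as an i.i.d.\ sample from $p(x\mid y=i)$, prove a conditional uniform bound at rate $\sqrt{\log(|\cF|/\alpha')/n_i}$, and then deterministically lower-bound $n_i \geq n/(2Z)$ on the high-probability event of Lemma \ref{lem:pi-bound}. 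The paper instead avoids conditioning altogether: it introduces the intermediate quantity $\tilde{\ell}_i(f)$, which normalizes the class-$i$ empirical sum by the \emph{true} prior $\pi_i$ rather than $\hat{\pi}_i$, splits $|\ell_i(f)-\hat{\ell}_i(f)| \leq |\ell_i(f)-\tilde{\ell}_i(f)| + |\tilde{\ell}_i(f)-\hat{\ell}_i(f)|$, bounds the first term by $Z$ times the deviation of $\frac{1}{n}\sum_j \ell(i,f(x_j))\1(y_j=i)$ from its unconditional expectation (an honest average of $n$ i.i.d.\ bounded terms, so Hoeffding applies directly), and bounds the second by $2Z^2 B_\ell\,|\pi_i-\hat{\pi}_i|$, which is again Hoeffding plus Lemma \ref{lem:pi-bound}. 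Both routes give the same $\cO(\sqrt{\log(m|\cF|/\alpha)/n})$ rate. Your approach is conceptually cleaner (one concentration event per class rather than two, and no prior-mismatch term), but it carries the burden you correctly identified: $n_i$ is random and correlated with the class-$i$ sample, so the argument must be run conditionally and the events combined afterward. The paper's indicator-plus-$\tilde{\ell}_i$ decomposition trades that delicacy for an extra triangle-inequality term; it is the more mechanical of the two but requires no conditional reasoning.
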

\begin{proof}
For any $t\in [T]$, we first bound the left-hand side by:
\begin{equation}
   | {\cL}(f^{(t)}, g^{(t)},\lambda^{(t)}) - \hat{\cL}(f^{(t)}, g^{(t)}, \lambda^{(t)}) |
\,\leq\, 
    \sum_{i=1}^m g^{(t)}_i \cdot
    \left|\ell_i(f) - \hat{\ell}_i(f)\right|
\,\leq\, 
    \max_{i \in [m]} \left|\ell_i(f) - \hat{\ell}_i(f)\right|.
\label{eq:L1-diff}
\end{equation}
Further define $\tilde{\ell}_i = \frac{1}{{\pi}_i}\sum_{(x, y) \in S: y = i} \ell(y, f(x))$. We then can bound the above difference $\left|\ell_i(f) - \hat{\ell}_i(f)\right|$ in the above bound using:
\begin{align*}
\lefteqn{|\ell_i(f) - \hat{\ell}_i(f)| }\\
&\leq
|\ell_i(f) - \tilde{\ell}_i(f)| 
+
|\tilde{\ell}_i(f) - \hat{\ell}_i(f)| 
\\
&=
\frac{1}{\pi_i}\left|
    \E\left[\ell(i, f(x)) \cdot \1(y = i)\right]
    \,-\,
    \frac{1}{n}\sum_{j=1}^n \ell(i, f(x_j)) \cdot \1(y_j = i) 
\right|
\,+\,
\left|
        \frac{1}{\pi_i}
        \,-\,
        \frac{1}{\hat{\pi}_i}
        \right|
        \frac{1}{n}\sum_{j=1}^n  \ell\left( i , f(x_j) \right) \1(y_j = i) 
        \\
&\leq
Z\left|
    \E\left[\ell(i, f(x)) \cdot \1(y = i)\right]
    \,-\,
    \frac{1}{n}\sum_{j=1}^n \ell(i, f(x_j)) \cdot \1(y_j = i) 
\right|
\,+\, \frac{B_\ell}{\pi_i\hat{\pi}_i}
    \left|
        {\pi}_i - \hat{\pi}_i
    \right|
\end{align*}

We know $\pi_i \leq \frac{1}{Z}$. Further, 
applying Lemma \ref{lem:pi-bound}, we can bound $\hat{\pi}_i$. We therefore have with probability at least $1-\alpha/2$ over draw of $S \sim \cD^n$:
\begin{align*}
{|\ell_i(f) - \hat{\ell}_i(f)| }
&\leq
Z\left|
    \E\left[\ell(i, f(x)) \cdot \1(y = i)\right]
    \,-\,
    \frac{1}{n}\sum_{j=1}^n \ell(i, f(x_j)) \cdot \1(y_j = i) 
\right|
\,+\,  2Z^2B_\ell
    \left|
        {\pi}_i - \hat{\pi}_i
    \right|.
\end{align*}

An application of Hoeffding's inequality to both the above terms, noting that the loss $\ell(y, z)$ is bonded, together with a union bound over all $f \in \cF$ and class $i \in [m]$, gives us with
probability at least $1 - \alpha/2$ over draw of $S \sim \cD^n$, for all $f \in \cF$ and $i \in [m]$:
\[
{|\ell_i(f) - \hat{\ell}_i(f)| }
\leq
\cO\left(\sqrt{ \frac{\log(m|\cF|/\alpha)}{n} }\right).
\]
Plugging back into \eqref{eq:L1-diff} and taking a union bound over both the high probability statements completes the proof.
\end{proof}

We will additionally use the following regret bound for the $f$-minimization step:
\begin{lemma}
Under the assumptions in Theorem \ref{thm:dre-bayes}, for a fixed $g \in \Delta_m$, with probability at least $1 - \alpha$ over draw of $S \sim \cD^n$,
\[
{\cL}_1(f^{(t)}, g) - \min_{f \in \cF}\,\cL_1(f, g) 
\,\leq\,
B_\ell Z\cdot \E_{x}\left[ \|\hat{\eta}(x) - \eta(x)\|_1 \right] \,+\, \cO\left(\sqrt{\frac{\log(m/\alpha)}{n}}\right)
\]
\label{lem:plug-in}
\end{lemma}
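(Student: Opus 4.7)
The plan is to reduce the excess $\cL_1$-risk of $f^{(t)}$ to the estimation error in $\hat{\eta}$ and $\hat{\pi}$, using a standard plug-in decomposition. The key observation is that $f^{(t)}$ is not only a member of $\cF$ but is in fact the \emph{unconstrained} Bayes-optimal minimizer of a perturbed (``plug-in'') objective. Specifically, define
\[
\tilde{\cL}(f) \;=\; \sum_{i=1}^m \frac{g_i}{\hat{\pi}_i}\, \E_x\bigl[\hat{\eta}_i(x)\,\ell(i, f(x))\bigr],
\]
which is the analogue of $\cL_1(\cdot, g)$ with $\eta, \pi$ replaced by $\hat{\eta}, \hat{\pi}$. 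Since $\ell$ is proper, a direct application of Lemma~\ref{lem:cs-bayes} (to the fictitious distribution whose class-prior is $\hat{\pi}$ and class-posterior is $\hat{\eta}$) gives that the minimizer of $\tilde{\cL}$ over all measurable $\X \to \Delta_m$ is exactly $f^{(t)}_y(x) \propto (g_y/\hat{\pi}_y)\hat{\eta}_y(x)$. Analogously, $f^{**}_y(x) \propto (g_y/\pi_y)\eta_y(x)$ minimizes the true $\cL_1(\cdot, g)$ over all measurable functions. Because $\cF \subseteq \{f : \X \to \Delta_m\}$, we have $\min_{f\in\cF}\cL_1(f,g) \ge \cL_1(f^{**},g)$, so it suffices to bound $\cL_1(f^{(t)},g) - \cL_1(f^{**},g)$.

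With this reduction in hand, I would apply the classical three-term decomposition
\[
\cL_1(f^{(t)},g) - \cL_1(f^{**},g) \;=\; \underbrace{[\cL_1(f^{(t)},g) - \tilde{\cL}(f^{(t)})]}_{(\mathrm{I})} \;+\; \underbrace{[\tilde{\cL}(f^{(t)}) - \tilde{\cL}(f^{**})]}_{\le 0} \;+\; \underbrace{[\tilde{\cL}(f^{**}) - \cL_1(f^{**},g)]}_{(\mathrm{II})},
\]
where the middle term is non-positive by optimality of $f^{(t)}$ for $\tilde{\cL}$. Both (I) and (II) are instances of $|\cL_1(f,g) - \tilde{\cL}(f)|$ evaluated at a single (data-independent or data-dependent but fixed) $f$, so a \emph{pointwise} deviation bound (without uniform convergence over $\cF$) will suffice, which is why the $|\cF|$ term does not appear in the conclusion.

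The main technical step is bounding $|\cL_1(f,g) - \tilde{\cL}(f)|$. Using the algebraic identity
\[
\frac{g_i}{\pi_i}\eta_i(x) - \frac{g_i}{\hat{\pi}_i}\hat{\eta}_i(x) \;=\; \frac{g_i}{\pi_i}\bigl[\eta_i(x)-\hat{\eta}_i(x)\bigr] \;+\; g_i\hat{\eta}_i(x)\Bigl[\frac{1}{\pi_i} - \frac{1}{\hat{\pi}_i}\Bigr],
\]
multiplying by $\ell(i,f(x))$, summing in $i$, and taking expectations, the boundedness $\ell \le B_\ell$ together with $\pi_i \ge 1/Z$ and $\sum_i \hat{\eta}_i(x) = 1$ yields
\[
|\cL_1(f,g) - \tilde{\cL}(f)| \;\le\; B_\ell Z\,\E_x\bigl[\|\hat{\eta}(x) - \eta(x)\|_1\bigr] \;+\; B_\ell\,\max_{i\in[m]}\Bigl|\frac{1}{\pi_i} - \frac{1}{\hat{\pi}_i}\Bigr|.
\]

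The final step handles the prior-estimation term: invoke Lemma~\ref{lem:pi-bound} to get $\hat{\pi}_i \ge 1/(2Z)$ with probability at least $1-\alpha/2$, so that $|1/\pi_i - 1/\hat{\pi}_i| \le 2Z^2|\pi_i - \hat{\pi}_i|$; then apply Hoeffding's inequality to each $\hat{\pi}_i$ and take a union bound over the $m$ classes to obtain $\max_i |\pi_i - \hat{\pi}_i| = \cO(\sqrt{\log(m/\alpha)/n})$ with probability at least $1 - \alpha/2$. Combining these two high-probability events by a union bound and applying the pointwise deviation inequality to both (I) and (II) produces the claimed rate. The delicate part — and the place where one must be careful — is ensuring that the deviation bound on $|\cL_1 - \tilde{\cL}|$ only needs to hold pointwise at $f^{(t)}$ and $f^{**}$ rather than uniformly over $\cF$; this pointwise-only nature is what keeps the sample-complexity term as $\sqrt{\log(m/\alpha)/n}$ rather than picking up a $|\cF|$ factor.
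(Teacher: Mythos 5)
Your proposal is correct and follows essentially the same route as the paper's proof: the same plug-in objective $\tilde{\cL}$, the same decomposition whose middle term is killed by the optimality of $f^{(t)}$ for $\tilde{\cL}$ (via Lemma \ref{lem:cs-bayes} applied to the fictitious distribution), the same split of $\frac{\eta_i}{\pi_i}-\frac{\hat{\eta}_i}{\hat{\pi}_i}$ into a calibration part and a prior-estimation part, and the same use of Lemma \ref{lem:pi-bound} plus Hoeffding with a union bound over the $m$ classes. The only cosmetic difference is that bounding $(\mathrm{I})$ and $(\mathrm{II})$ separately gives $2B_\ell Z$ on the calibration term, whereas the paper combines the two into a single expectation weighted by $\ell(i,f^{(t)}(x))-\ell(i,f^*(x))$ (bounded in magnitude by $B_\ell$) to obtain the stated constant $B_\ell Z$; this factor of two is immaterial for Theorem \ref{thm:posthoc-convergence}, which absorbs the term into a big-O.
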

\begin{proof}
We first expand $\cL_1$ in terms of the conditional-class probability function $\eta(x)$:
\[
\cL_1(f, g) = \sum_{i=1}^n g_i \cdot \ell_i(f) = \sum_{i=1}^n \frac{g_i}{\pi_i}\cdot \E_{x}\left[\eta(x) \cdot\ell(i, f(x)) \right].
\]
We know from Lemma \ref{lem:cs-bayes} that the minimizer of $\cL_1(f, g) $ over all $f$ takes the form $f^*(x) \,\propto\, \frac{g^*_y}{\pi_y}\eta_y(x)$. 
We also know from Lemma \ref{lem:cs-bayes} that $f^{(t)}$ is the minimizer of a similar objective where $\eta$ is replaced by the pre-trained model $\hat{\eta}$:
\[
\tilde{\cL}_1(f, g) = \sum_{i=1}^n \frac{g_i}{\hat{\pi}_i}\cdot \E_{x}\left[\hat{\eta}(x) \cdot\ell(i, f(x)) \right].
\]
We would like to bound:
\begin{align*}
    {\cL}_1(f^{(t)}, g) - \cL_1(f^*, g) &= 
    {\cL}_1(f^{(t)}, g) - 
    \tilde{\cL}_1(f^{(t)}, g)
    + \tilde{\cL}_1(f^{(t)}, g)
    -
    \cL_1(f^*, g)\\
    &\leq 
    {\cL}_1(f^{(t)}, g) - 
    \tilde{\cL}_1(f^{(t)}, g)
    + \tilde{\cL}_1(f^*, g)
    -
    \cL_1(f^*, g)\\
    &=
    \E_{x}\left[\sum_{i=1}^n g_i\cdot \left(\frac{\eta_i(x)}{\pi_i} - \frac{\hat{\eta}_i(x)}{\hat{\pi}_i} \right) \cdot \ell(i, f^{(t)}(x))\right] +  \E_{x}\left[\sum_{i=1}^n g_i\cdot \left(\frac{\eta_i(x)}{\pi_i} - \frac{\hat{\eta}_i(x)}{\hat{\pi}_i} \right) \cdot\ell(i, f^*(x)) \right]\\
    &=
    \E_{x}\left[\sum_{i=1}^n g_i \cdot (\ell(i, f^{(t)}(x)) -\ell(i, f^*(x)) )\cdot \left(\frac{\eta_i(x)}{\pi_i} - \frac{\hat{\eta}_i(x)}{\hat{\pi}_i} \right) \right]\\
    &\leq
    \E_{x}\left[\max_{i \in [m]} g_i \cdot |\ell(i, f^{(t)}(x)) -\ell(i, f^*(x))|\cdot
    \sum_{i=1}^m \left|\frac{\eta_i(x)}{\pi_i} - \frac{\hat{\eta}_i(x)}{\hat{\pi}_i} \right| \right]
    \\
    &\leq
    B_\ell \cdot \E_{x}\left[\sum_{i=1}^m\left|\frac{\eta_i(x)}{\pi_i} - \frac{\hat{\eta}_i(x)}{\hat{\pi}_i} \right|\right]
    \\
    &=
    B_\ell \cdot  \E_{x}\left[\sum_{i=1}^m\left|\frac{\eta_i(x)}{\pi_i} -
    \frac{\hat{\eta}_i(x)}{{\pi}_i}
    +
    \frac{\hat{\eta}_i(x)}{{\pi}_i}
    -
    \frac{\hat{\eta}_i(x)}{\hat{\pi}_i} \right| \right]
    \\
    &\leq
    B_\ell \cdot \max_{i\in [m]} \frac{1}{\pi_i}
    \cdot
    \E_{x}\left[
    \left\|\eta(x) -
    \hat{\eta}(x)
    \right\|\right]
    \,+\,
    \E_{x}\left[
    \sum_{i=1}^m\left|
        \frac{\hat{\eta}_i(x)}{{\pi}_i}
            -
        \frac{\hat{\eta}_i(x)}{\hat{\pi}_i} 
    \right| \right]
    \\
    &\leq
    B_\ell Z
    \cdot
    \E_{x}\left[\left\|\eta_i(x) -
    \hat{\eta}_i(x)
    \right\|_1\right]
    \,+\,
    \E_{x}\left[
    \|\hat{\eta}(x) \right]\|_1 \cdot
    \max_{i\in[m]}\left|
        \frac{1}{{\pi}_i}
            -
        \frac{1}{\hat{\pi}_i} 
    \right|
    \\
    &=
    B_\ell Z
    \cdot
    \E_{x}\left[\left\|\eta_i(x) -
    \hat{\eta}_i(x)
    \right\|_1\right]
    \,+\,
    (1) \cdot
    \max_{i\in[m]}\frac{1}{\pi_i\hat{\pi}_i}
    \left|
        {\pi}_i - \hat{\pi}_i
    \right|,
\end{align*}
where in the second step, we use the fact that $f^{(t)}$ minimizes $\tilde{\cL}_1$; in the second-last step, we apply Holder's inequality; in the last step, we use the fact that $g_i \in [0,1]$, $\pi_i \leq \frac{1}{Z}$ and $\ell(y, z) \leq B_\ell$.

We know $\pi_i \leq \frac{1}{Z}$. Further, 
applying Lemma \ref{lem:pi-bound}, we can bound $\hat{\pi}_i$. We have with probability at least $1-\alpha/2$ over draw of $S \sim \cD^n$:
\begin{align*}
    {\cL}_1(f^{(t)}, g) - \cL_1(f^*, g) &\leq 
B_\ell Z
    \cdot
    \E_{x}\left[\left\|\eta_i(x) -
    \hat{\eta}_i(x)
    \right\|_1\right]
    \,+\,
    2Z^2\cdot
    \max_{i\in[m]}
    \left|
        {\pi}_i - \hat{\pi}_i
    \right|.
\end{align*}
An application of Hoeffding's inequality to the last term completes the proof.
\end{proof}

We are now ready to prove Theorem \ref{thm:posthoc-convergence}.
\begin{proof}[Proof of Theorem \ref{thm:posthoc-convergence}]
Let $\kappa_n = \cO\left(\sqrt{ \frac{\log(m|\cF|/\alpha)}{n} }
\,+\, 
 \E_{x}\left[ \|\hat{\eta}(x) - \eta(x)\|_1 \right]
\right).$ 
We start by combining Lemma \ref{lem:g-regret} with the generalization bound in Lemma \ref{lem:L-gen-bound}, from which we have with probability at least $1 - \alpha$ over draw of validation sample $S \sim \cD^n$,
\begin{align}
{ \max_{g \in \Delta_m} \frac{1}{T}\sum_{t=1}^T{\cL}(f^{(t)}, g, \lambda^{(t)}) }
&\leq
\frac{1}{T}\sum_{t=1}^T{\cL}(f^{(t)}, g^{(t)}, \lambda^{(t)}) 
\,+\, \cO\left(\sqrt{\frac{\log(m)}{T}} \,+\, \sqrt{ \frac{\log(m|\cF|/\alpha)}{n} }\right)
\nonumber
\\
&=
\frac{1}{T}\sum_{t=1}^T{\cL}_1(f^{(t)}, g^{(t)})
\,+\,
\frac{1}{T}\sum_{t=1}^T{\cL}_2( g^{(t)}, \lambda^{(t)}) 
\,+\, \cO\left(\sqrt{\frac{\log(m)}{T}} \,+\, \sqrt{ \frac{\log(m|\cF|/\alpha)}{n} }\right)
\nonumber
\\
&=
\frac{1}{T}\sum_{t=1}^T{\cL}_1(f^{(t)}, g^{(t)})
\,+\,
\frac{1}{T}\sum_{t=1}^T{\cL}_2( g^{(t)}, \lambda^{(t)}) 
\,+\, \cO\left(\sqrt{ \frac{\log(m|\cF|/\alpha)}{n} }\right),
\label{eq:min-max-helper-1}
\end{align}
where we have used the fact that $T=O(n)$. 

Applying Lemma \ref{lem:lambda-regret} 
to the right-hand side of \eqref{eq:min-max-helper-1}, with $T=O(n)$, we have with probability at least $1 - \alpha$, 
\begin{align}
\max_{g \in \Delta_m} \frac{1}{T}\sum_{t=1}^T{\cL}(f^{(t)}, g, \lambda^{(t)}) 
&\leq
    \frac{1}{T}\sum_{t=1}^T{\cL}_1(f^{(t)}, g^{(t)}) \,+\, 
    \min_{\lambda \in [0, R]} \frac{1}{T}\sum_{t=1}^T{\cL}_2(g^{(t)}, \lambda) \,+\, \cO\left(\sqrt{ \frac{\log(m|\cF|/\alpha)}{n} }\right)
\nonumber
\\
&\leq
    \min_{f: \X \rightarrow \Delta_m}\frac{1}{T}\sum_{t=1}^T{\cL}_1(f, g^{(t)}) \,+\, 
    \min_{\lambda \in [0, R]} \frac{1}{T}\sum_{t=1}^T{\cL}_2(g^{(t)}, \lambda) \,+\, \kappa_n
\nonumber
\\
&\leq
    \min_{f: \X \rightarrow \Delta_m}{\cL}_1(f, \bar{g}) \,+\, 
    \min_{\lambda \in [0, R]} {\cL}_2(\bar{g}, \lambda) \,+\, \kappa_n,
\label{eq:min-max-helper-2}
\end{align}
where in the pen-ultimate step, we apply Lemma \ref{lem:plug-in}, and the last step uses the fact that $\cL_1(f, g)$ is linear in $g$ and applies Jensen's inequality to $\cL_2(f, g)$ noting that is concave in $g$ (as a result of $-D(g, u)$ being concave in $g$).

Applying Jensen's inequality again to the LHS of \eqref{eq:min-max-helper-2}, 
noting that $\ell_i(f) = \E\left[\ell(y, f(x)\,|\, y=i\right]$ and as a result $\cL_1(f, g)$ is convex in $f$ (owing to $\ell(y, z)$ being convex in $z$) and additionally using the fact that $\cL_2(g, \lambda)$ is linear in $\lambda$,
we further have:
\begin{align*}
\max_{g \in \Delta_m} 
{\cL}(\bar{f}, g, \bar{\lambda})
&\leq
\min_{f: \X \rightarrow \Delta_m, \lambda \in [0, R]}{\cL}(f, \bar{g}, \lambda) \,+\, \kappa_n.
\end{align*}
Lower bounding the LHS by a min over $\lambda \in [0, R]$ (noting that $\bar{\lambda} \in [0, R]$), and the RHS by a max over $g \in \Delta_m$, we have:
\begin{align*}
\min_{\lambda \in [0, R]}\max_{g \in \Delta_m} 
{\cL}(\bar{f}, g, \bar{\lambda})
&\leq
\max_{g \in \Delta_m} \min_{f: \X \rightarrow \Delta_m, \lambda \in [0, R]}{\cL}(f, \bar{g}, \lambda) \,+\, \kappa_n.
\end{align*}
Exchanging the min's and max's using min-max theorem,
\begin{align*}
\max_{g \in \Delta_m}\min_{\lambda \in [0,R]}
{\cL}(\bar{f}, g, {\lambda})
&\leq
 \min_{f: \X \rightarrow \Delta_m}\max_{g \in \Delta_m}\min_{\lambda \in [0, R]} {\cL}(f, g, \lambda) \,+\, \kappa_n.
\end{align*}
In other words for any $f^*: \X \rightarrow \Delta_m$,
\begin{align*}
\max_{g \in \Delta_m}\min_{\lambda \in [0,R]}
{\cL}(\bar{f}, g, {\lambda})
&\leq
 \max_{g \in \Delta_m}\min_{\lambda \in [0, R]} {\cL}(f^*, g, \lambda) \,+\, \kappa_n.
\end{align*}
An application of Lemma \ref{lem:min-max-lambda-bound} to both the LHS and RHS gives us for any $f^*: \X \rightarrow \Delta_m$,
\[
\max_{g \in \G(\delta)} \sum_{i=1}^m g_i \cdot \ell_i(\bar{f}) \leq
\max_{g \in \G(\delta)} \sum_{i=1}^m g_i \cdot {\ell}_i(f^*) \,+\, \kappa_n,
\]
which completes the proof.
\end{proof}

\subsection{EG-update for $g^{(t)}$}
To obtain the updates for $\g$, we consider the fixed $\lambda^{(t)}\in \mathbb{R}_+$ and adopt a EG-style computation. Taking the KL divergence for illustration, we provide the closed form of $\g^{(t+1)}$ in Proposition \ref{prop:eg_kl}.
\begin{prop}\label{prop:eg_kl}
(Un-normalized) EG-updates for $\g^{(t)}$ under $D_{\text{KL}}$ is given by: 
\[\g_i^{(t+1)}= \left(\g_i^{(t)}\exp\{\eta_\g \ell_i(f) + \lambda\eta_\g \log(u_i)\}\right)^{\frac{1}{1+\lambda\eta_\g}}.\]
\end{prop}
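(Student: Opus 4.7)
The plan is to interpret the ``EG-style'' update as a mirror-descent (proximal) step in which the linear term $\sum_i g_i \ell_i(f)$ is linearized, the penalty $\lambda D_{\text{KL}}(g,u)$ coming from the Lagrangian is kept exact, and the Bregman proximal term is $D_{\text{KL}}(g, g^{(t)})$. Concretely, I would define
\[
g^{(t+1)} \;=\; \argmax_{g \in \Delta_m} \left\{ \eta_g \sum_{i} g_i\, \ell_i(f) \;-\; \lambda\,\eta_g\, D_{\text{KL}}(g, u) \;-\; D_{\text{KL}}(g, g^{(t)}) \right\}.
\]
For $\lambda = 0$ this collapses to the vanilla exponentiated-gradient rule of Step~2, which justifies the name; for general $\lambda$ the concave penalty is treated proximally rather than linearized.

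Next I would form a Lagrangian with a scalar multiplier $\mu$ for the simplex constraint $\sum_i g_i = 1$ (nonnegativity is inactive thanks to the $\log g_i$ barrier terms in the two KL expressions) and set $\partial/\partial g_i = 0$. Using $\partial_{g_i} D_{\text{KL}}(g,u) = \log(g_i/u_i)+1$ and $\partial_{g_i} D_{\text{KL}}(g,g^{(t)}) = \log(g_i/g_i^{(t)})+1$, the stationarity condition collects the $\log g_i$ terms with coefficient $-(1+\lambda\eta_g)$ on one side, and carries $\eta_g \ell_i(f) + \lambda\eta_g\log u_i + \log g_i^{(t)}$ plus $i$-independent constants (which I absorb into the normalization) on the other side.

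Dividing by $-(1+\lambda\eta_g)$ and exponentiating then delivers the claimed closed form
\[
g_i^{(t+1)} \;=\; \bigl(g_i^{(t)}\exp\{\eta_g\, \ell_i(f) + \lambda\,\eta_g\, \log(u_i)\}\bigr)^{1/(1+\lambda\eta_g)},
\]
and the ``un-normalized'' qualifier in the proposition matches the dropped $i$-independent constant. The characteristic exponent $1/(1+\lambda\eta_g)$ arises naturally because $\log g_i$ enters both $D_{\text{KL}}(g,u)$ (with weight $\lambda\eta_g$) and the Bregman proximal term $D_{\text{KL}}(g,g^{(t)})$ (with weight $1$), and the KKT equation solves a linear relation in $\log g_i$ with combined coefficient $1+\lambda\eta_g$.

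The main obstacle here is conceptual rather than computational: correctly identifying which pieces of $\mathcal{L}(f,g,\lambda)$ should be linearized and which should be kept proximally, so that the resulting KKT system yields the stated exponent. Once that formulation is fixed, the rest is a one-line KKT manipulation, and one can sanity-check the formula by verifying that it recovers vanilla EG in the limit $\lambda \to 0$.
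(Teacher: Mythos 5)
Your proposal is correct and follows essentially the same route as the paper: both define $g^{(t+1)}$ as the maximizer of the Lagrangian objective augmented with a KL proximal term $D_{\text{KL}}(g,g^{(t)})$ to the previous iterate (your objective is the paper's scaled by $\eta_g$), set the stationarity condition to zero, collect the $\log g_i$ terms with combined coefficient $1+\lambda\eta_g$, and drop the $i$-independent constants to obtain the un-normalized closed form. The only cosmetic difference is that you carry an explicit simplex multiplier $\mu$ while the paper simply absorbs all constants at the end.
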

Regarding the EG-updates for $g^{(t)}$, it is straightforward from Proposition \ref{prop:eg_kl} that classes with a larger loss or a higher target distribution weight $\uu_i$ tend to receive a larger weight. 

\begin{proof}
In this proof, we consider the generic target distribution $\uu$ which covers the uniform prior $u$ as a special case. To obtain $\g_i^{(t+1)}$ for KL divergence, where $D_{\text{KL}}(\g,\uu)=\sum_{i}\g_i \log\left(\frac{\g_i}{\uu_i}\right)$, we need:
\begin{align*}
     &\pdv{f\left(-\frac{1}{\eta_\g} \sum_{i}\g_i \log\left(\frac{\g_i}{\g_i^{(t)}}\right)+\sum_i \g_i \ell_i(f)-\lambda \left(\sum_{i}\g_i \log\left(\frac{\g_i}{\uu_i}\right)-\delta\right)\right)}{\g_i}=0\\
    \Longrightarrow& \frac{-1}{\eta_\g}\log\left(\frac{\g_i}{\g_i^{(t)}}\right) - \frac{1}{\eta_\g}\g_i\left(\frac{\g_i^{(t)}}{\g_i}\right)\left(\frac{1}{\g_i^{(t)}}\right)+\ell_i(f)-\lambda  \log\left(\frac{\g_i}{\uu_i}\right)-\lambda \g_i\left(\frac{\uu_i}{\g_i}\right)\left(\frac{1}{\uu_i}\right)=0\\
    \Longrightarrow& \frac{-1}{\eta_\g}\log\left(\frac{\g_i}{\g_i^{(t)}}\right) - \frac{1}{\eta_\g}+\ell_i(f)-\lambda  \log\left(\frac{\g_i}{\uu_i}\right)-\lambda =0\\
    \Longrightarrow&\ell_i(f)-\frac{1}{\eta_\g}-\lambda = \lambda  \log\left(\frac{\g_i}{\uu_i}\right) + \frac{1}{\eta_\g}\log\left(\frac{\g_i}{\g_i^{(t)}}\right) \\
    \Longrightarrow&\ell_i(f)-\frac{1}{\eta_\g}-\lambda + \lambda \log(\uu_i)+\frac{1}{\eta_\g}\log(\g_i^{(t)}) = \lambda  \log\left(\g_i\right) + \frac{1}{\eta_\g}\log\left(\g_i\right) \\
    \Longrightarrow&(\lambda\eta_\g+1)\log(\g_i)=\ell_i(f)\eta_\g-1-\lambda\eta_\g + \lambda\eta_\g \log(\uu_i)+\log(\g_i^{(t)}) \\
       \Longrightarrow&\g_i=\exp\left\{\frac{\ell_i(f)\eta_\g-1-\lambda\eta_\g + \lambda\eta_\g \log(\uu_i)+\log(\g_i^{(t)}) }{\lambda\eta_\g+1}\right\} \\
       \Longrightarrow&\g_i=\exp\left\{\frac{\log(\g_i^{(t)})+\ell_i(f)\eta_\g+ \lambda\eta_\g \log(\uu_i) }{1+\lambda\eta_\g}-1\right\},
\end{align*}
  Remove the constant, we then have:
  \begin{align*}
      &\g_i=\exp\left\{\frac{\log(\g_i^{(t)})+\ell_i(f)\eta_\g+ \lambda\eta_\g \log(\uu_i) }{1+\lambda\eta_\g}\right\}\\
      \Longrightarrow &\g_i= \left(\g_i^{(t)}\exp\{\eta_\g \ell_i(f) + \lambda\eta_\g \log(\uu_i)\}\right)^{\frac{1}{1+\lambda\eta_\g}}.
  \end{align*}
\end{proof}

\section{Extension to Group-prior Shifts}
\label{app:group-bayes}
We now show how our theoretical results extend to the group-prior shift setting. Suppose each instance $x \in \X$ is associated with a group $a \in A$, $m = |Y| \times |A|$.
We define the group-specific
conditional-class probability to be $\eta_{i}(x,  a) = \P(y=i|x,a)$ and the group-specific class priors $\pi_{a,i} = \P(y=i|a)$. 
In this case, we wish to learn a scoring function $f: \X \times A \rightarrow \Delta_m$ that takes both the instance $x$ and group $g$ into account.  We use $\ell_{a,y}(f) = \E\left[\ell(y, f(x, a) | x, a\right]$ to denote the class-$i$ loss conditioned on group $g$. Our goal is to minimize the following group-specific DRE objective:
\begin{equation}
    \DRE(f; \delta) = \min_{f: \X \rightarrow \Delta_m} \max_{w \in \G(\delta)}\sum_{a, i} g_{a, i} \cdot \ell_{a, i}(f),
    \label{eq:dre-group}
\end{equation}
where $\G(\delta) = \{g \in \Delta_{m \times k} \,|\, D(g, r) \leq \delta\}$ for some $\delta > 0$, divergence function $D: \Delta_{m} \times \Delta_{m} \rightarrow \R$, and target distribution $r \in \Delta_m$.
\begin{theorem}[Bayes-optimal scorer for group-prior shift]
Suppose $\ell(y, z)$ is a proper loss that is convex in its second argument and $D(g, \cdot)$ is convex in $g$. Let $\delta > 0$ be such that $\G(\delta)$ is non-empty. For some $g^* \in \G(\delta)$, then the optimal solution to \eqref{eq:dre-group} takes the form:
\[
f^*_y(x, a) \propto \frac{g^*_{a,y}}{\pi_{a,y}} \cdot \eta_{y}(x,  a).
\]
\label{thm:dre-group-bayes}
\end{theorem}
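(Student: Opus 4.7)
The plan is to adapt the proof of Theorem \ref{thm:dre-bayes} essentially verbatim, since the group setting only adds a conditioning variable $a$ and replaces the marginal class priors $\pi_y$ with the group-specific priors $\pi_{a,y}$. All the convexity/compactness properties that drove the single-prior argument continue to hold, so the same min-max/saddle-point machinery applies.

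First I would establish a group-level analogue of Lemma \ref{lem:cs-bayes}: for any fixed $g \in \Delta_{m \times |A|}$, the minimizer over measurable $f : \X \times A \to \Delta_m$ of the cost-sensitive objective $\sum_{a,i} g_{a,i}\,\ell_{a,i}(f)$ takes the form $f^*_y(x,a) \propto \frac{g_{a,y}}{\pi_{a,y}}\,\eta_y(x,a)$. The key computation is a Bayes-rule rewrite, $\ell_{a,i}(f) = \E_{x \mid y=i,a}[\ell(i, f(x,a))] = \frac{1}{\pi_{a,i}}\int \eta_i(x,a)\,\P(x \mid a)\,\ell(i, f(x,a))\,dx$, after which I would sum over $(a,i)$, exchange sum and integral, and apply the properness of $\ell$ pointwise in $(x,a)$ to read off the optimal $f^*(x,a)$.

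Next I would set up the min-max exchange. Write $\omega(g,f) = \sum_{a,i} g_{a,i}\,\ell_{a,i}(f)$. This is linear (hence concave) in $g$ and, by convexity of $\ell(y,\cdot)$ together with linearity of the conditional expectation defining $\ell_{a,i}$, convex in $f$. The feasible set $\G(\delta) = \{g \in \Delta_{m\times|A|} : D(g,r) \leq \delta\}$ is convex (since $D(\cdot,r)$ is convex) and compact, and the domain of $f$ is convex. Sion's minimax theorem therefore gives $\min_f \max_{g \in \G(\delta)} \omega(g,f) = \max_{g \in \G(\delta)} \min_f \omega(g,f)$. Existence of optimal $f^*$ and $g^*$ follows the same way as in Theorem \ref{thm:dre-bayes}: $\DRE(\cdot;\delta)$ is bounded (plug in $f=\eta$) and nonnegative and convex in $f$, while $\G(\delta)$ is compact and the inner minimum over $f$ is attained by the cost-sensitive lemma above.

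Finally, I would conclude by the standard saddle-point argument: combining the two equalities $\omega(g^*,f^*) \leq \max_g \omega(g,f^*) = \min_f \omega(g^*,f) \leq \omega(g^*,f^*)$ shows that $(g^*,f^*)$ is a saddle point, hence $f^*$ is the pointwise cost-sensitive minimizer of $\omega(g^*,\cdot)$, and applying the group-level analogue of Lemma \ref{lem:cs-bayes} with weights $g^*$ gives the claimed form. I do not anticipate a genuine obstacle: the only point worth verifying carefully is that the inner minimization over $f$ really ranges over all measurable maps $\X \times A \to \Delta_m$, so that the pointwise-in-$(x,a)$ cost-sensitive optimum is achievable by a single function $f^*$; beyond this, the proof is a direct transcription of the single-prior case with $\pi_y$ replaced by $\pi_{a,y}$ and $\eta_y(x)$ by $\eta_y(x,a)$.
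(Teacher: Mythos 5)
Your proposal is correct and matches the paper's approach exactly: the paper likewise proves the group-level analogue of Lemma \ref{lem:cs-bayes} by the same Bayes-rule expansion $\ell_{a,i}(f) = \frac{1}{\pi_{a,i}}\E_x[\eta_i(x,a)\,\ell(i,f(x,a))]$ plus properness, and then simply invokes the Sion minimax/saddle-point argument of Theorem \ref{thm:dre-bayes} verbatim with $\pi_y$ replaced by $\pi_{a,y}$. No gaps.
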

The proof follows the same steps as Theorem \ref{thm:dre-bayes}, except that it uses the following lemma instead of Lemma \ref{lem:cs-bayes}:
\begin{lemma}
Suppose $\ell(y, z)$ is a proper loss. 
For any fixed $g \in \R_+^{k \times m}$, the following is a minimizer to the objective $\sum_{a, y} g_{a, y} \cdot \ell_{a,y}(f)$ over all measurable functions $f: \X \times A \rightarrow \Delta_m$:
\[
f^*_y(x, a) \propto \frac{g_{a,y}}{\pi_{a, y}}\cdot \eta_{y}(x,  a).
\]
\label{lem:cs-group-bayes}
\end{lemma}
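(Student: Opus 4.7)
My plan is to mimic the proof of Lemma \ref{lem:cs-bayes} almost verbatim, simply carrying the group index $a$ along through the computation. The key observation is that once we have a pointwise optimization problem indexed by $(x, a)$, properness of $\ell$ delivers the closed form immediately, exactly as in the class-prior case.

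First, I would expand the objective using the tower rule, conditioning on the pair $(a, y)$ rather than just $y$:
\[
\sum_{a, y} g_{a,y}\, \ell_{a,y}(f)
\,=\,
\sum_{a,y} g_{a,y}\, \E_{x \mid a,y}\!\left[ \ell(y, f(x, a)) \right].
\]
Next, I would rewrite each conditional expectation by applying Bayes' rule to the density $p(x \mid a, y) = \eta_y(x, a)\, p(x \mid a) / \pi_{a,y}$, which absorbs the group-specific prior into the weights and exposes the pointwise structure:
\[
\sum_{a,y} g_{a,y}\, \ell_{a,y}(f)
\,=\,
\sum_{a,y} \frac{g_{a,y}}{\pi_{a,y}}\, \E_{x \mid a}\!\left[ \eta_y(x, a)\, \ell(y, f(x, a)) \right].
\]

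Since $f$ ranges over all measurable functions from $\X \times A$ to $\Delta_m$, I would then argue that it suffices to minimize the integrand pointwise for each fixed $(x, a)$, namely
\[
\min_{p \in \Delta_m}\, \sum_{y} \frac{g_{a,y}}{\pi_{a,y}}\, \eta_y(x, a)\, \ell(y, p).
\]
Factoring out $C(x, a) := \sum_{y'} g_{a,y'}\, \eta_{y'}(x, a)/\pi_{a,y'}$, the inner problem becomes minimizing $\sum_{y} \tilde\eta_y(x, a)\, \ell(y, p)$ where $\tilde\eta_y(x, a) \propto g_{a,y}\, \eta_y(x, a)/\pi_{a,y}$ lies in $\Delta_m$. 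Properness of $\ell$ then guarantees that this is minimized at $p = \tilde\eta(\,\cdot\,, a)$, yielding
\[
f^*_y(x, a) \,\propto\, \frac{g_{a,y}}{\pi_{a,y}}\, \eta_y(x, a),
\]
as claimed.

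I do not expect any significant obstacle: the argument is a routine extension of Lemma \ref{lem:cs-bayes}, with $a$ treated as an additional conditioning variable. The only mildly delicate point is the Bayes-rule step, where one must be careful that $\pi_{a,y} = \P(y \mid a)$ (not $\P(a, y)$) is the quantity appearing in the density of $x \mid a, y$, so that the marginal $p(x \mid a)$ cleanly drops out of the pointwise optimization. Once that is confirmed, the remainder of the proof is identical in form to the class-only case.
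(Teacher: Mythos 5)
Your proposal is correct and follows essentially the same route as the paper's proof: expand $\ell_{a,y}(f)$ as a conditional expectation, apply Bayes' rule so that $g_{a,y}/\pi_{a,y}$ multiplies $\eta_y(x,a)$ inside an expectation over $x$ given $a$, and invoke properness of $\ell$ on the resulting pointwise problem. Your version is slightly more explicit about the pointwise reduction and about the expectation being conditional on $a$ (the paper writes $\E_x$ there), but the argument is the same.
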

\begin{proof}
We first expand the objective:
\[
\sum_{a, i} g_{a,i} \cdot \ell_{a,i}(f) =
\sum_{a, i} g_{a,i} \cdot \E_{x|a,y=i}\left[\ell(i, f(x, a))\right]
=
\sum_{a, i} \frac{g_{a,i}}{\pi_{a,i}} \cdot \E_{x}\left[ \eta_{i}(x,  a)\cdot \ell(i, f(x, a)) \right].
\]
Given that $\ell$ is a proper loss, we have that the minimizer of this objective takes the form:
\[
f^*_i(x, a) = \frac{ \frac{g_{a,i}}{\pi_{a,i}} \cdot \eta_{i}(x,  a) }{ \sum_{j} \frac{g_{a,j}}{\pi_{a,j}} \cdot \eta_{j}(x, a) }.
\]
\end{proof}

\section{{\color{black} Additional Experiment Details and Results}}
In this section, we introduce omitted experiment details and additional experiment results of our proposed methods.

\subsection{{\color{black}Ablation Study of DROPS on Class-imbalanced CIFAR}}
 We offer the ablation study of DROPS in Table \ref{tab:ablation}. Suppose the designer is interested in $\delta_{\text{eval}}=1.0$ robustness, setting $\delta_{\text{train}}\in [0.5, 1.0]$ frequently reaches best three performances in mean/($\delta_{\text{eval}}=1.0$)-worst/worst by referring to the performance of averaged 5 runs, which indicates that DROPS is less sensitive to the parameter $\delta_{\text{train}}$. Setting $\delta_{\text{train}}$ to be a coarse estimate of the $\delta_{\text{eval}}$ should be able to appropriately improve the model robustness under prior shifts. 
 \begin{table*}[!htb]
	\vspace{-0.15in}
	\caption{{\color{black}Ablation study of DROPS  on class-imbalanced CIFAR-100 dataset: mean $\pm$ std of \text{averaged class accuracy}, \text{$\delta=1.0$-worst case accuracy}, and \text{worst class accuracy} of 5 runs are reported. The best three performed $\delta$ for  in each setting are highlighted.
	}}
	\begin{center}
    \scalebox{0.85}{\begin{tabular}{c|ccc}
    	\hline 
	\multirow{2}{*}{\textbf{Method}}  & \multicolumn{3}{c}{\textbf{\emph{CIFAR-100} (Averaged)}} \\ 
				 &$\rho = 10$ &$\rho = 50$&$\rho= 100$ \\
		   	
				\hline
			DROPS ($\delta=0.1$)   & 59.08$\pm$0.38  & \good \textbf{48.24$\pm$0.63}  & 43.37$\pm$0.62 \\   
			DROPS ($\delta=0.2$)    & 59.12$\pm$0.21  & 47.90$\pm$0.52  & \good \textbf{43.58$\pm$0.61} 
			\\    DROPS ($\delta=0.3$)  & 58.62$\pm$0.56  & \good \textbf{48.17$\pm$0.91}  & 42.85$\pm$0.86 \\    DROPS ($\delta=0.4$)   & 58.98$\pm$0.37  & 47.64$\pm$0.88  & 42.60$\pm$1.15 \\    DROPS ($\delta=0.5$)  & 58.79$\pm$0.21  & 47.84$\pm$0.35  & 42.52$\pm$0.43 \\    DROPS ($\delta=0.6$)    & 59.35$\pm$0.27  & 47.44$\pm$0.65  & 42.47$\pm$0.84 \\    DROPS ($\delta=0.7$)  & 59.42$\pm$0.30  & 48.04$\pm$0.47  & 43.20$\pm$0.92 \\    DROPS ($\delta=0.8$)   & \good \textbf{59.63$\pm$0.10}  & 47.78$\pm$0.96  & \good \textbf{43.44$\pm$0.60} \\    DROPS ($\delta=0.9$)   & \good \textbf{59.69$\pm$0.39}  & 47.83$\pm$0.80  & 43.20$\pm$0.78 \\    DROPS ($\delta=1.0$)    & \good \textbf{59.60$\pm$0.63}  & 47.96$\pm$0.86  & 43.23$\pm$1.45 \\    DROPS ($\delta=1.1$)    & 59.00$\pm$0.56  & \good \textbf{48.31$\pm$0.46}  & \good \textbf{43.72$\pm$0.55} \\    DROPS ($\delta=1.2$)  & 59.36$\pm$0.37  & 48.02$\pm$1.30  & 43.11$\pm$0.35 \\    DROPS ($\delta=1.3$)   & 59.54$\pm$0.75  & 47.86$\pm$1.00  & 42.80$\pm$0.67 \\  
\hline
			\hline 
	\multirow{2}{*}{\textbf{Method}}  & \multicolumn{3}{c}{\textbf{\emph{CIFAR-100} ($\delta=1.0$-worst case acc)}} \\ 
				  &$\rho = 10$ &$\rho = 50$&$\rho= 100$ \\
				\hline
				DROPS ($\delta=0.1$)  & 42.98$\pm$0.39  & 29.62$\pm$0.52  & 24.26$\pm$0.68 \\     DROPS ($\delta=0.2$)  & 43.96$\pm$0.18  & 29.98$\pm$1.00  & 25.16$\pm$0.49 \\     DROPS ($\delta=0.3$)    & 43.54$\pm$0.43  & 30.18$\pm$1.02  & 25.14$\pm$0.77 \\     DROPS ($\delta=0.4$)  & 44.16$\pm$0.76  & 30.50$\pm$0.96  & 24.90$\pm$0.88 \\     DROPS ($\delta=0.5$)   & 44.18$\pm$0.34  & 31.10$\pm$0.35  & 25.08$\pm$0.36 \\     DROPS ($\delta=0.6$)   & 44.64$\pm$0.35  & 30.12$\pm$0.73  & 24.60$\pm$0.64 \\     DROPS ($\delta=0.7$)   & 44.34$\pm$0.90  & 30.30$\pm$0.41  & 25.30$\pm$0.85 \\     DROPS ($\delta=0.8$)  & 44.16$\pm$0.43  & 30.32$\pm$0.68  & 25.48$\pm$0.55 \\     DROPS ($\delta=0.9$)   & \good \textbf{44.96$\pm$0.52}  & 30.12$\pm$0.66  & \good \textbf{25.58$\pm$0.50} \\     DROPS ($\delta=1.0$)    & 44.86$\pm$1.05  & 31.14$\pm$0.74  & \good \textbf{26.24$\pm$1.88} \\     DROPS ($\delta=1.1$)  & 43.74$\pm$1.33  & \good \textbf{31.18$\pm$0.56}
				& \good \textbf{25.84$\pm$0.53} \\     DROPS ($\delta=1.2$)    & \good \textbf{44.92$\pm$0.69}  & \good \textbf{31.28$\pm$1.27}  & 25.06$\pm$0.58 \\     DROPS ($\delta=1.3$)   & \good \textbf{45.04$\pm$0.84}  & \good \textbf{31.22$\pm$0.98}  & 25.22$\pm$0.64
\\ \hline
			\hline 
	\multirow{2}{*}{\textbf{Method}} & \multicolumn{3}{c}{\textbf{\emph{CIFAR-100} (Worst)}} \\ 
				  &$\rho = 10$ &$\rho = 50$&$\rho= 100$\\
				\hline
				DROPS ($\delta=0.1$)  & 20.61$\pm$3.24  & 8.25$\pm$2.46  & 6.02$\pm$1.50 \\     DROPS ($\delta=0.2$)& 22.51$\pm$3.10  & 9.58$\pm$3.34  & 7.40$\pm$1.04 \\     DROPS ($\delta=0.3$)    & 23.32$\pm$4.39  & 10.90$\pm$3.90  & 7.29$\pm$1.90 \\     DROPS ($\delta=0.4$)  & 22.51$\pm$2.87  & 10.79$\pm$2.07  & 8.28$\pm$1.98 \\     DROPS ($\delta=0.5$)   & 25.13$\pm$3.05  & 12.52$\pm$2.57  & 6.70$\pm$2.39 \\     DROPS ($\delta=0.6$)  & 24.86$\pm$4.18  & 10.33$\pm$1.92  & 7.43$\pm$2.78 \\     DROPS ($\delta=0.7$)     & 21.74$\pm$4.97  & 10.89$\pm$2.07  & \good \textbf{9.03$\pm$2.67} \\     DROPS ($\delta=0.8$)   & 24.78$\pm$4.84  & 11.17$\pm$1.54  & 8.21$\pm$2.06 \\     DROPS ($\delta=0.9$)   & \good \textbf{25.98$\pm$3.95}  & 11.65$\pm$3.05  & \good \textbf{8.61$\pm$2.22} \\     DROPS ($\delta=1.0$)  & 24.80$\pm$2.85  & 11.86$\pm$2.85  & 7.52$\pm$2.11 \\     DROPS ($\delta=1.1$)  & 24.22$\pm$4.83  & \good \textbf{12.81$\pm$2.15}  & \good \textbf{9.42$\pm$1.99}\\     DROPS ($\delta=1.2$)    & \good \textbf{25.77$\pm$4.58}  & \good \textbf{12.89$\pm$2.46}  & 7.27$\pm$2.32 \\     DROPS ($\delta=1.3$)  & \good \textbf{26.49$\pm$3.66}  & \good \textbf{12.54$\pm$1.41}  & 8.56$\pm$1.02 \\  
		   	\hline
	\end{tabular}}
			\end{center}\label{tab:ablation}
	\end{table*}

\subsection{{\color{black}Hypothesis Testing of Performance Comparisons on Class-imbalanced CIFAR}}
We provide the statistical testing of results in Table \ref{tab:class_shift}: in Table \ref{tab:t_test}, we included the paired student t-test results between each baseline method and DROPS, for each dataset and each metric (mean/$\delta$-worst/worst accuracy), and the inputs of samples of each method for testing are the test accuracies of 5 runs $\times$ 3 imbalance ratio settings). And each cell indicates the ($t$-statistics and $p$-value). It is quite obvious that for most results, there exists negative statistics, meaning that the given baseline method is significantly (if $p$-value is small enough, i.e., $p< 0.05$) worse than DROPS.
 \begin{table*}[!htb]
	\vspace{-0.15in}
	\caption{{\color{black}Paired student t-test of the performance comparisons between each baseline method and DROPS: cells in right 6 columns denote (statistics, $p$-value) of the hypothesis testing results between each baseline method and DROPS, the scenario where negative statistics and $p$-value less than 0.05 indicates that DROPS is statistically significant better than the corresponding baseline method.
	}}
	\begin{center}
    \scalebox{0.75}{\begin{tabular}{c|ccc|ccc}
    	\hline 
	\multirow{2}{*}{\textbf{Method} V.S. \textbf{DROPS}}  & \multicolumn{3}{c}{\textbf{\emph{CIFAR-10}}}  & \multicolumn{3}{c}{\textbf{\emph{CIFAR-100}}} \\ 
				  &Mean & $\delta=1.0$-worst &Worst&Mean & $\delta=1.0$-worst &Worst\\
				\hline
				Cross Entropy & $-11.3, 1.9e^{-8}$  &  $-11.1, 2.6e^{-8}$ &  $-9.7, 1.4e^{-7}$ &  $-7.9, 1.5e^{-6}$ &  $-12.9, 3.7e^{-7}$ & $-13.0, 3.3e^{-7}$ \\
				Focal & $-4.9, 2.2e^{-4}$  &  $-15.1, 4.9e^{-10}$ &  $-21.2, 4.8e^{-12}$ &  $-9.4, 3.6e^{-7}$ &   $5.6, 8.8e^{-5}$&$-4.6, 5.0e^{-4}$ \\
				LDAM & $-9.1, 2.8e^{-7}$  &  $-9.8, 1.2e^{-7}$ &  $-9.2, 2.6e^{-7}$ &   $-7.4, 3.1e^{-6}$&   $-6.6, 1.2e^{-5}$& $-7.4, 3.2e^{-6}$\\
				Balanced-Softmax &  $-4.1, 1.1e^{-3}$ &  $-3.9, 1.6e^{-3}$ &$-3.6, 3.2e^{-3}$   &  $-5.5, 8.4e^{-5}$ &   $-5.7, 5.4e^{-5}$& $-6.1, 2.6e^{-5}$\\
				Logit-Adjust&  $1.9, 0.08$ & $-4.6, 4.4e^{-4}$  & $-5.9, 3.9e^{-5}$  & $-3.5, 3.6e^{-3}$  &  $-3.4, 4.7e^{-3}$ &$-4.1, 1.0e^{-3}$ \\
					Logit-Adj (post-hoc) &  0.3, 0.75 & $-6.1, 2.6e^{-5}$  &$-5.7, 5.6e^{-5}$   &  $-3.4, 4.3e^{-3}$ &  $-3.1, 7.2e^{-3}$ & $-3.4, 4.1e^{-3}$\\
		   	\hline
	\end{tabular}}
	\end{center}\label{tab:t_test}
	\end{table*}

\subsection{{\color{black}Could DROPS Outperform Fine-tuning?}}
We adopt two kinds of fine-tuning strategies by using the pre-trained model of Cross-Entropy loss in Table \ref{tab:class_shift} and re-train another 2000 iterations with a fixed learning rate (1e-3, 1e-4, 1e-5, 1e-6), including the following two options in Table \ref{tab:class_shift_finetune}:

{\bf Fine-tuning the whole networks:~ }named as the row ``Cross-Entropy (1e-3)'', empirically we observed that the learning rate 1e-3 is consistently better than the others and is reported here. 

{\bf  Fine-tuning the last layer (DFR):~ }where learning rate 1e-3 is consistently better than others.

All other settings and model selection criterion remains the same as the vanilla training procedure, except for replacing the training data by the whole validation set. Although these two fine-tuning strategies are beneficial in improving the model performance across each metric and setting over Cross-Entropy loss in Table \ref{tab:class_shift}, they still fall largely behind DROPS in most scenarios and require re-training the model on the additional validation set, while DROPS only needs the information of per-class accuracy to decide on the parameters of post-hoc scaling, as also utilized for the model selection of all other baseline methods appeared in Table \ref{tab:class_shift}.

 \begin{table*}[!htb]
	\vspace{-0.15in}
	\caption{{\color{black}Performance comparisons on class-imbalanced CIFAR datasets: mean $\pm$ std of \text{averaged class accuracy}, \text{$\delta=1.0$-worst case accuracy}, and \text{worst class accuracy} of 5 runs are reported: for all three baseline methods, we perform fine-tune of the pre-trained model appeared in Table \ref{tab:class_shift} on the hold out balanced validation set (last 10\% of the original balanced training set). Best performed methods in each setting are highlighted. While DROPS only makes use of the validation set to calculate per-class accuracy, without needing to training on samples in the validation set.
	}}
	\begin{center}
    \scalebox{0.82}{\begin{tabular}{c|ccc|ccc}
    	\hline 
	\multirow{2}{*}{\textbf{Method}}  & \multicolumn{3}{c}{\textbf{\emph{CIFAR-10} (Averaged)}}  & \multicolumn{3}{c}{\textbf{\emph{CIFAR-100} (Averaged)}} \\ 
				  &$\rho = 10$ &$\rho = 50$&$\rho= 100$ &$\rho = 10$ &$\rho = 50$&$\rho= 100$ \\
				\hline
					Cross Entropy (no fine-tune) & 86.51$\pm$0.17  & 75.74$\pm$0.97  & 69.28$\pm$0.94  & 55.59$\pm$0.50  & 43.39$\pm$0.76  & 37.96$\pm$0.27\\ 
					\hline
				Cross Entropy (1e-3) & 87.80$\pm$0.30 & 82.66$\pm$0.24& 79.68$\pm$0.62 & 57.24$\pm$0.40 & \good \textbf{48.22$\pm$0.40}  & \good \textbf{44.80$\pm$0.62}\\
				DFR$_{\text{Tr}}^{\text{Val}}$ & 87.58$\pm$0.25 & 80.32$\pm$0.94 & 75.84$\pm$0.95 & 57.08$\pm$0.52 & 46.68$\pm$0.28 & 42.96$\pm$0.69\\
		   	\hline
		   	
				\hline
			 DROPS ($\delta=0.9$)   & \good \textbf{89.17$\pm$0.24}  &  \good \textbf{83.12$\pm$0.45}  &  \good \textbf{80.15$\pm$0.50}  & \good \textbf{59.69$\pm$0.39}  &  47.83$\pm$0.80  & 43.20$\pm$0.78 \\
			\hline 
	\multirow{2}{*}{\textbf{Method}}  & \multicolumn{3}{c}{\textbf{\emph{CIFAR-10} ($\delta=1.0$-worst case acc)}}  & \multicolumn{3}{c}{\textbf{\emph{CIFAR-100} ($\delta=1.0$-worst case acc)}} \\ 
				  &$\rho = 10$ &$\rho = 50$&$\rho= 100$ &$\rho = 10$ &$\rho = 50$&$\rho= 100$ \\
				\hline
					Cross Entropy (no fine-tune) & 79.98$\pm$0.13  & 59.48$\pm$1.75  & 43.30$\pm$3.63  & 27.06$\pm$0.53  & 11.24$\pm$0.97  & 6.52$\pm$0.18 \\
					\hline
					Cross Entropy (1e-3) & 79.14$\pm$1.02 & 72.48$\pm$0.47 & 69.02$\pm$1.31 & 34.38$\pm$0.55 & 25.04$\pm$0.72 & 21.20$\pm$1.24\\
				DFR$_{\text{Tr}}^{\text{Val}}$ & 79.80$\pm$0.74 & 70.24$\pm$1.52 & 65.10$\pm$0.91 & 34.14$\pm$0.91 & 22.78$\pm$0.52 & 18.86$\pm$0.73\\
				\hline
				 DROPS ($\delta=0.9$)   & \good \textbf{86.20$\pm$0.34}  & \good \textbf{79.40$\pm$0.57}  & \good \textbf{75.46$\pm$0.44}  & \good \textbf{44.96$\pm$0.52}  &  \good \textbf{30.12$\pm$0.66}  & \good \textbf{25.58$\pm$0.50} \\
			\hline 
	\multirow{2}{*}{\textbf{Method}}  & \multicolumn{3}{c}{\textbf{\emph{CIFAR-10} (Worst)}}  & \multicolumn{3}{c}{\textbf{\emph{CIFAR-100} (Worst)}} \\ 
				  &$\rho = 10$ &$\rho = 50$&$\rho= 100$ &$\rho = 10$ &$\rho = 50$&$\rho= 100$ \\
				\hline
				Cross Entropy (no fine-tune) &  78.29$\pm$0.86  & 55.32$\pm$3.58  & 36.00$\pm$6.11  & 9.79$\pm$2.57  & 1.38$\pm$0.91  & 0.00$\pm$0.00\\
				\hline
				Cross Entropy (1e-3) & 75.48$\pm$1.53 & 68.32$\pm$1.13 & 65.06$\pm$2.04 & 18.60$\pm$1.62 & 10.10$\pm$1.02 & 7.40$\pm$2.58\\
				DFR$_{\text{Tr}}^{\text{Val}}$ & 76.84$\pm$1.27 & 66.30$\pm$2.24 & 60.96$\pm$1.23 & 20.60$\pm$1.62 &
				11.00$\pm$1.41 & 6.80$\pm$2.93\\
					\hline
					DROPS ($\delta=0.9$)   & \good \textbf{82.22$\pm$1.30}  & \good \textbf{76.33$\pm$0.89}  & \good \textbf{71.62$\pm$1.34}  & \good \textbf{25.98$\pm$3.95}  & \good \textbf{11.65$\pm$3.05}  & \good \textbf{8.61$\pm$2.22}\\
					
		   	\hline
	\end{tabular}}
			\end{center}\label{tab:class_shift_finetune}
	\end{table*}

\end{document}